\documentclass{article}

\usepackage[main, final]{neurips_2026}
\usepackage{wrapfig}

\usepackage[utf8]{inputenc} 
\usepackage[T1]{fontenc}    
\usepackage{hyperref}       
\usepackage{url}            
\usepackage{booktabs}       
\usepackage{amsfonts}       
\usepackage{nicefrac}       
\usepackage{microtype}      
\usepackage{xcolor}         
\usepackage{amsmath}
\usepackage{amsthm}
\usepackage{graphicx}
\usepackage{algorithm}
\usepackage{algpseudocode}

\usepackage{siunitx}
\usepackage{threeparttable}

\usepackage{lineno}

\usepackage{microtype}
\usepackage{graphicx}
\usepackage{subfigure}
\usepackage{booktabs} 
\usepackage{multirow} 
\usepackage{hyperref}

\usepackage[table]{xcolor}

\usepackage{mathtools}
\usepackage{amsthm}
\usepackage{bm}
\usepackage{algorithm}
\usepackage{algorithmicx,algpseudocode}
\usepackage[capitalize,noabbrev]{cleveref}

\newcommand{\RR}{\mathbb{R}}
\newcommand{\HH}{\mathcal{H}}
\newcommand{\inner}[2]{\langle #1,\, #2 \rangle}
\newcommand{\norm}[1]{\lVert #1 \rVert}
\DeclareMathOperator*{\argmax}{arg\,max}

\definecolor{darkblue}{rgb}{0, 0, 0.5}
\hypersetup{colorlinks=true, citecolor=darkblue, linkcolor=darkblue, urlcolor=darkblue}

\newtheorem{theorem}{Theorem}[section]

\newtheorem{corollary}[theorem]{Corollary}
\newtheorem{proposition}[theorem]{Proposition}

\newtheorem{remark}[theorem]{Remark}

\usepackage{titletoc}

\newcommand\DoToC{%
  \startcontents
  \printcontents{}{1}{\textbf{Table of Contents}\vskip3pt\hrule\vskip5pt}
  \vskip3pt\hrule\vskip5pt
}

\title{Kernelized Activation Steering}

\author{%
  Laziz U. Abdullaev\thanks{Equal contribution. Correspondence to \texttt{laziz.abdullaev@u.nus.edu}} \\
  National University of Singapore\\
  \texttt{laziz.abdullaev@u.nus.edu}\\
  \And
  Minh-Hieu Pham\footnotemark[1]\\
  Hanoi University of Science and Technology\\
  \texttt{hieu.pm220062@sis.hust.edu.vn}\\
  \And
  Bach Do\\
  Hanoi University of Science and Technology\\
  \texttt{bach.dtg225472@sis.hust.edu.vn}\\
  \And
  Khoat Than\\
  Hanoi University of Science and Technology\\
  \texttt{khoattq@soict.hust.edu.vn}\\
  \And
  Tan M. Nguyen\\
  National University of Singapore\\
  \texttt{tanmn@nus.edu.sg}
}

\begin{document}

\maketitle

\begin{abstract}
Activation steering provides a simple, training-free mechanism for controlling attributes of generative models such as sentiment, style, and helpfulness. However, standard approaches such as Difference-in-Means apply a single input-independent steering vector across all activations, limiting expressivity and ignoring the local geometry of the activation space.
We propose Kernelized Activation Steering (KAS), a unifying framework that lifts activation steering into a reproducing kernel Hilbert space. KAS formulates steering as an optimization problem expressed purely via kernel evaluations, yielding an implicit, activation-dependent steering score without constructing explicit feature maps.
Unlike DiM, KAS induces locally adaptive steering: each activation is modified according to its relative position with respect to source and target reference sets, producing a nonlinear steering field over the representation space. Importantly, DiM is recovered as a special case under a linear kernel, while richer kernels enable geometry-aware interventions.
Across standard activation steering tasks, including jailbreaking LLMs and image style control, KAS outperforms or is on par with the existing methods.
\end{abstract}

\section{Introduction}
\label{sec:intro}
 
Large language models (LLMs) are now known to encode rich semantic concepts such as sentiment, refusal, truthfulness, persona as identifiable structures in their internal representations \citep{tigges2024language, marks2024the, arditi2024refusal}. A central problem in alignment and controllability is to manipulate these structures \emph{at inference time}, in order to alter behaviour without the cost or fragility of full fine-tuning. \emph{Activation steering} \citep{turner2024steeringlanguagemodelsactivation, rimsky2024steering, arditi2024refusal} has emerged as a particularly practical instrument for this purpose: it modifies a single layer's activations using a precomputed direction and requires no parameter updates.
 
The dominant instantiation of this idea is \emph{Difference-in-Means} (DiM) \citep{rimsky2024steering, turner2024steeringlanguagemodelsactivation, arditi2024refusal}, which estimates the steering direction as the difference between mean activations on a \emph{source} dataset (behaviour to suppress) and a \emph{target} dataset (behaviour to elicit). DiM derives both its appeal and its limitations from the \emph{linear representation hypothesis} (LRH) \citep{park2025linear, mikolov2013linguistic, arora2018linear, elhage2022superposition} — the assumption that high-level concepts are encoded as linear directions in activation space. While the LRH has proven surprisingly robust for many tasks, mounting evidence highlights pervasive polysemanticity and curved structure in transformer representations \citep{engels2025languagemodelfeaturesonedimensionally, elhage2022superposition}, where features cluster along nonlinear manifolds that no single global translation can faithfully capture. As a consequence, DiM-style interventions can be either ineffective on out-of-distribution activations or destructive of unrelated capabilities once the steering coefficient is increased to compensate.
 
We propose \textbf{Kernelized Activation Steering (KAS)}, a framework that retains the simplicity of DiM while relaxing its linear constraint via the kernel trick. We lift activations into a reproducing kernel Hilbert space (RKHS) $\HH$, define the feature-space steering direction as the lifted DiM vector, and derive an \emph{implicit} steering score that depends on the input only through kernel evaluations against the reference sets. Steering then proceeds by gradient ascent on this score with a proximal regulariser that bounds deviation from the original activation. Because kernel gradients themselves depend on the current activation, the resulting update is locally adaptive: KAS attracts activations toward target reference points and repels them from source reference points, producing a nonlinear steering field (see Figure \ref{fig:dim-vs-kas}). Crucially, KAS recovers DiM exactly under a linear kernel, situating it as a strict generalization rather than an alternative.
 
Overall, our contributions are threefold:
\begin{enumerate}
    \item \textit{A nonlinear generalization of activation steering.} We formulate activation steering as optimization in an RKHS and derive an implicit steering score that requires only kernel evaluations between the input and the source/target reference sets (\S\ref{sec:method}). Furthermore, DiM is recovered exactly under the linear kernel (Remark~\ref{rem:linear-recovery}), unifying classical kernel methods with modern activation-level interventions.
    \item \textit{A theoretical bridge between activation steering and kernel two-sample test.} We show that the KAS score is the empirical witness function associated with maximum mean discrepancy, connecting activation steering to kernel two-sample testing and explaining how characteristic kernels can detect distributional differences that linear mean-based steering misses.
    \item \textit{A locally adaptive RBF instantiation.} We instantiate the framework with the Gaussian RBF kernel, yielding a closed-form, activation-dependent gradient update (\S\ref{sec:rbf}).
\end{enumerate}
 
We evaluate KAS on jailbreak steering across three instruction-tuned LLMs, CAA-style behavioral steering, and text-to-image style control. KAS improves jailbreak attack success over ActAdd on Gemma and Qwen while matching it on Llama, achieves the strongest effects on four CAA behaviors with non-null steering responses, and improves style induction over Mean-AcT across most tested intervention strengths while preserving prompt alignment. Additional ablations examine kernel approximation and steering from a single contrastive pair (\S\ref{sec:experiments}).

\begin{figure}
    \centering
    \includegraphics[width=\linewidth]{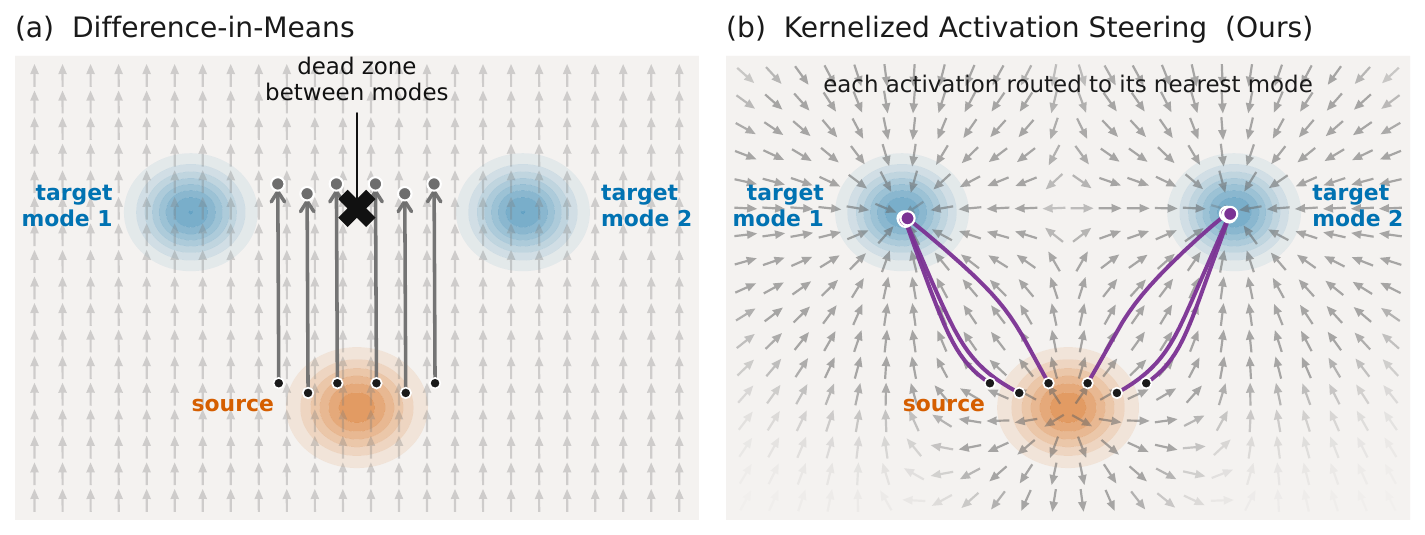}
    \caption{An illustrative comparison between standard Difference-in-Means (DiM) and Kernelized Activation Steering (KAS). Source activations (orange) lie between two target modes (blue).  \textbf{(a)} DiM applies a single global translation $\bm{v} = \mu_B - \mu_A$, sending every activation into the saddle between the modes, a region with no semantic content under either target behaviour. No choice of steering strength $\alpha$ recovers the modes. \textbf{(b)} Kernelized activation steering (KAS) produces a data-adaptive vector field that routes each activation to its nearest  mode.}
    \label{fig:dim-vs-kas}
\end{figure}
 
\section{Preliminaries}
\label{sec:prelim}
 
\textbf{Notation.}
We write $[n] := \{1, \ldots, n\}$ for any positive integer $n$. For a decoder-only transformer \citep{vaswani2017attention}, we denote by $\bm{x}^{(\ell)} \in \RR^d$ the residual-stream activation at layer $\ell \in [L]$ for a given input prompt.
 
\textbf{Activation Steering via Difference-in-Means.}
Activation steering modifies a model's behaviour at inference time by directly intervening on internal activations, without updating parameters. Given a contrastive pair of prompt sets — a \emph{source} set $A$ exhibiting behaviour to suppress and a \emph{target} set $B$ exhibiting behaviour to elicit — the difference-in-means (DiM) method \citep{rimsky2024steering, turner2024steeringlanguagemodelsactivation, arditi2024refusal, venhoff2025understanding} extracts a steering vector by contrasting their mean activations:
\begin{equation}\label{eq:dim}
    \bm{v} = \frac{1}{n_B}\sum_{j \in [n_B]} \bm{b}_j - \frac{1}{n_A}\sum_{i \in [n_A]} \bm{a}_i,
\end{equation}
where $\{\bm{a}_i\}_{i \in [n_A]}$ and $\{\bm{b}_j\}_{j \in [n_B]}$ are the layer-$\ell^{*}$ activations of prompts in $A$ and $B$. At inference, an incoming activation $\bm{x}$ is steered by the additive update
\begin{equation}\label{eq:dim-update}
    \bm{x} \;\leftarrow\; \bm{x} + \alpha\, \bm{v},
\end{equation}
where $\alpha \in \RR$ controls the intervention strength. The intervention layer $\ell^{*}$ is typically chosen on a held-out validation set \citep{arditi2024refusal}.
 
A defining property of DiM is that the steering direction $\bm{v}$ is \emph{global} and \emph{input-independent}: every activation receives the same translation, regardless of where it sits in representation space. Our work generalises this construction to allow activation-dependent steering through a kernelised formulation.
 
\textbf{Kernel Methods and Reproducing Kernel Hilbert Spaces.}
A function $k : \mathcal{X} \times \mathcal{X} \to \RR$ is a \emph{positive semi-definite (PSD) kernel} if, for any finite set $\{\bm{x}_i\}_{i=1}^n \subset \mathcal{X}$, the Gram matrix $K \in \RR^{n \times n}$ with entries $K_{ij} = k(\bm{x}_i, \bm{x}_j)$ is positive semi-definite. By the Moore–Aronszajn theorem \citep{aronszajn1950theory, berlinet2004reproducing}, every PSD kernel uniquely defines a reproducing kernel Hilbert space (RKHS) $\HH$ together with a feature map $\varphi : \mathcal{X} \to \HH$ satisfying $k(\bm{x}, \bm{y}) \;=\; \inner{\varphi(\bm{x})}{\varphi(\bm{y})}_{\HH}$ for all $\bm{x}, \bm{y} \in \mathcal{X}$.
The \emph{kernel trick} states that any algorithm expressible solely through inner products in $\HH$ can be implemented using only kernel evaluations on $\mathcal{X}$, without materializing $\varphi$. This is what enables us to operate in a high- or infinite-dimensional feature space at the cost of $\mathcal{O}(n)$ kernel evaluations per query \citep{scholkopf2002learning, hofmann2008kernel}. Kernel methods are usually favorable as it is believed that most nonlinear systems behave approximately linearly in higher, possibly infinite-dimensional feature spaces \citep{cover1965geometrical, bouvrie2017kernel}.
 
\section{Related Work}
\label{sec:related}
 
Activation-level interventions on language models have been studied under several names, including activation addition \citep{turner2024steeringlanguagemodelsactivation}, representation engineering \citep{zou2023representation}, contrastive activation addition \citep{rimsky2024steering}, and inference-time intervention \citep{li2023inferencetime}. Most of these methods extract a single steering direction per concept via difference-in-means, linear probing, or PCA on contrastive activations \citep{alain2016understanding, li2023inferencetime, zou2023representation, teo2025the}. A separate line of work uses sparse autoencoders to identify steerable features \citep{templeton2024scaling, obrien2024steering, bayat2025steering, kantamneni2025are, xie2025comparative}. However, most of these approaches apply a fixed translation in activation space and thus inherit the limitations of input-independent steering. To address this issue, there is a growing body of work dedicated to designing input-dependent steering mechanisms. K-steer \citep{oozeer2025beyond} trains a nonlinear attribute classifier on hidden activations and computes steering directions via gradients of that classifier while SAS \citep{bayat2025steering} relies on pretrained SAEs as nonlinear feature extractors. Another line of research models activation steering via Optimal Transport \citep{rodriguez2024meanact} through which a numerous input-dependent affine \citep{singh2024representation} and nonlinear \citep{abdullaev2026concept} steering methods have been proposed. Perhaps the closest to our method is Curveball steering \citep{raval2026curveball}, which uses kernel PCA for nonlinear dimensionality reduction and perform steering in the reduced space while \citep{saillenfest2025nonlinear} employs a similar technique for concept erasure problem \citep{ravfogel2020null, belrose2023leace, chowdhury2025fundamental}. More broadly, kernel methods have been used for distribution matching in generative modeling \citep{li2015generative}, particle-based variational inference \citep{liu2016stein}, and mode seeking in diffusion-based image restoration \citep{hu2025kernel}, providing related distributional perspectives. Our training-free framework, Kernelized Activation Steering (KAS), instead derives an activation-dependent intervention from an interpretable, kernel-based optimization objective and recovers DiM exactly as a special case under a linear kernel, along with a relation to hypothesis testing techniques \citep{gretton2012kernel, muandet2017kernel} providing a complementary theoretical lens for activation steering design.
 
\section{Kernelized Activation Steering}
\label{sec:method}
 
Let $\{\bm{a}_i\}_{i \in [n_A]}$ and $\{\bm{b}_j\}_{j \in [n_B]}$ denote the layer-$\ell^{*}$ activations of prompts drawn from a source set $A$ and target set $B$, respectively. The DiM steering vector \eqref{eq:dim} is effective when $A$ and $B$ are well separated by a linear hyperplane in activation space, but totally disregards local structures of the activation distributions and possible nonlinearities in decision boundary. To approach these issues properly, we shall derive DiM activation steering as a special case of a more general, principled operation in the following section.
 
\subsection{Lifting Difference-in-Means into an RKHS}
\label{sec:lift}
 
Let $\varphi : \RR^d \to \HH$ be the feature map associated with a PSD kernel $k$, where $\HH$ may be infinite-dimensional. The natural lift of \eqref{eq:dim} into $\HH$ is
\begin{equation}\label{eq:phi-dim}
    \bm{v}_{\varphi} \;=\; \frac{1}{n_B}\sum_{j \in [n_B]} \varphi(\bm{b}_j) \;-\; \frac{1}{n_A}\sum_{i \in [n_A]} \varphi(\bm{a}_i).
\end{equation}
For sufficiently expressive kernels, for example, universal kernels such as the Gaussian RBF \citep{steinwart2008support}, the contrastive features of the lifted samples $\{\varphi(\bm{a}_i)\}$ and $\{\varphi(\bm{b}_j)\}$ are expected to be captured well in $\HH$, motivating $\bm{v}_{\varphi}$ as a steering direction in the feature space. Materialising $\bm{v}_{\varphi}$ explicitly is, however, infeasible when $\HH$ is very high or infinite-dimensional. We circumvent this by expressing the entire steering objective in terms of kernel evaluations as we discuss next.
 
Ideally, for an activation $\bm{x}_0 \in \RR^d$, we would like the steered activation $\bm{x}$ to satisfy the equation $\varphi(\bm{x}) - \varphi(\bm{x}_0) = \eta \bm{v}_{\varphi}$ for some $\eta > 0$, which represents that the feature-space steering direction is the same as $\bm v_\varphi$. Motivated by this, we design the feature-space steering mechanism to increase the dot product similarity $\inner{ \varphi(\bm{x}) - \varphi(\bm{x}_0)}{\bm{v}_{\varphi}}$ with respect to $\bm x$. As the contribution of $\varphi(\bm x_0)$ to this objective is independent of $\bm x$, we define the following \emph{implicit steering score}
\begin{equation}\label{eq:pi-def}
    \Pi(\bm{x}) \;:=\; \inner{\varphi(\bm{x})}{\bm{v}_{\varphi}}_{\HH}.
\end{equation}
By bilinearity of the inner product and the kernel-trick identity $\inner{\varphi(\bm{x})}{\varphi(\bm{y})}_{\HH} = k(\bm{x}, \bm{y})$,
\begin{align}\label{eq:pi-kernel}
    \Pi(\bm{x}) &= \inner{\varphi(\bm{x})}{\frac{1}{n_B}\sum_{j \in [n_B]} \varphi(\bm{b}_j) \;-\; \frac{1}{n_A}\sum_{i \in [n_A]} \varphi(\bm{a}_i)} \nonumber \\
    &= \frac{1}{n_B}\sum_{j \in [n_B]} k(\bm{x}, \bm{b}_j) \;-\; \frac{1}{n_A}\sum_{i \in [n_A]} k(\bm{x}, \bm{a}_i),
\end{align}
which depends only on kernel evaluations between $\bm{x}$ and the reference sets. Intuitively, $\Pi(\bm{x})$ measures the alignment of $\varphi(\bm{x})$ with the lifted DiM direction $\bm{v}_{\varphi}$. Therefore, increasing $\Pi$ pushes the feature representation of $\bm{x}$ toward the the target set and away from that of the source set.
 
\subsection{Steering as an Optimization Problem}
\label{sec:opt}
 
Given an initial activation $\bm{x}_0 \in \RR^d$, we cast kernelized steering as the regularized maximization problem given by
\begin{equation}\label{eq:kas-opt}
    \bm{x}^{*} \;=\; \argmax_{\bm{x} \in \RR^d}\;
    F(\bm{x}) \,, \qquad
    F(\bm{x}) \;:=\; \Pi(\bm{x}) \;-\; \frac{\lambda}{2}\, \norm{\bm{x} - \bm{x}_0}^2,
\end{equation}
where $\lambda \ge 0$ is a proximal coefficient that optionally prevents the steered activation from drifting too far from $\bm{x}_0$ and thereby disrupting unrelated capabilities of the model. When $k$ is differentiable, we approximate $\bm{x}^{*}$ by gradient ascent on $F$:
\begin{align}\label{eq:kas-grad}
    \bm{x} \;\gets\; \bm{x} + \eta \!\left[
        \frac{1}{n_B}\sum_{j \in [n_B]} \nabla_{\bm{x}}\, k(\bm{x}, \bm{b}_j)
        - \frac{1}{n_A}\sum_{i \in [n_A]} \nabla_{\bm{x}}\, k(\bm{x}, \bm{a}_i)
        - \lambda\, (\bm{x} - \bm{x}_0)
    \right],
\end{align}
where $\eta > 0$ is the step size. Crucially, the update depends on $\bm{x}$ through the kernel gradients, so the effective steering direction \emph{varies with the current activation} as opposed to DiM, which applies a fixed translation regardless of input. Also, note that we set $\lambda = 0$ for most of our experiments since we usually take only one gradient ascent step on the objective as our steering mechanism due to inference-time computational constraints.

\begin{remark}[DiM as a special case]
\label{rem:linear-recovery}
For the linear kernel $k(\bm{x}, \bm{y}) = \bm{x}^\top \bm{y}$, we have $\nabla_{\bm{x}}\, k(\bm{x}, \bm{y}) = \bm{y}$, so \eqref{eq:kas-grad} reduces, in the unregularized case, to
\begin{equation*}
    \bm{x} \;\gets\; \bm{x} + \eta\!\left[
        \frac{1}{n_B}\sum_{j \in [n_B]} \bm{b}_j
        \;-\; \frac{1}{n_A}\sum_{i \in [n_A]} \bm{a}_i
    \right] \;=\; \bm{x} + \eta\, \bm{v},
\end{equation*}
which recovers DiM steering with strength $\alpha = \eta$. Thus KAS strictly generalizes DiM, with general kernels yielding activation-dependent updates that DiM cannot express.
\end{remark}

\begin{remark}[Normalized implementation]
\label{rem:normalized-update}
Equation~\eqref{eq:kas-grad} presents the raw-gradient update to make the connection to the optimization objective and DiM transparent. In Algorithm~\ref{alg:rbf} and all experiments, we instead use the normalized update $\bm{x}_{t+1}=\bm{x}_t+\eta\,\nabla F(\bm{x}_t)/\norm{\nabla F(\bm{x}_t)}$, whenever $\nabla F(\bm{x}_t)\neq 0$. The full gradient $\nabla F(\bm{x}_t)=\nabla\Pi(\bm{x}_t)-\lambda(\bm{x}_t-\bm{x}_0)$ is formed before normalization, so $\eta$ represents an absolute step length in activation space. Under the linear kernel with $\lambda=0$, this becomes $\bm{x}_{t+1}=\bm{x}_t+\eta\,\bm{v}/\norm{\bm{v}}$, which recovers the same one-parameter family of DiM interventions $\bm{x}_t+\alpha\bm{v}$ under the correspondence $\eta=\alpha\norm{\bm{v}}$.
\end{remark}

\textbf{Connection to maximum mean discrepancy.}
Maximum mean discrepancy (MMD) \citep{gretton2012kernel} measures the difference between two distributions through their mean embeddings in an RKHS. For source and target activation distributions $q_A$ and $q_B$, respectively, define $\mu_A := \mathbb{E}_{\bm{a}\sim q_A}[\varphi(\bm{a})]$ and $\mu_B := \mathbb{E}_{\bm{b}\sim q_B}[\varphi(\bm{b})]$. Their MMD is
$$
    \mathrm{MMD}[\HH,q_B,q_A]
    = \|\mu_B-\mu_A\|_{\HH}
    = \sup_{\|f\|_{\HH}\leq 1}
      \left(\mathbb{E}_{q_B}[f]-\mathbb{E}_{q_A}[f]\right).
$$
The maximizing function is known as the \emph{MMD witness function}. It assigns larger values to regions of the input space that are more characteristic of $q_B$ than of $q_A$, with the notion of similarity determined by the kernel. This construction closely mirrors KAS in the sense that the lifted steering direction
$\bm{v}_{\varphi}$ in \eqref{eq:phi-dim} is the empirical difference between the two kernel mean embeddings, and evaluating it at an activation $\bm{x}$ produces the score $\Pi(\bm{x})$. Thus, whereas a conventional two-sample test aggregates the witness into a scalar distributional discrepancy, KAS uses its gradient as an intervention, moving individual activations toward regions favored by the target distribution and away from regions favored by the source distribution. The following proposition formalizes this equivalence.


\begin{proposition}[KAS as MMD witness ascent]
\label{prop:mmd-witness}
Let $q_A$ and $q_B$ denote the source and target activation distributions, with
mean embeddings $\mu_A, \mu_B \in \HH$. The population steering score $\Pi^{\infty}(\bm{x})
    \;:=\; \mathbb{E}_{\bm{b}\sim q_B}[k(\bm{x},\bm{b})]
        - \mathbb{E}_{\bm{a}\sim q_A}[k(\bm{x},\bm{a})]$
is the unnormalized maximum mean discrepancy (MMD) witness function for the
two-sample problem $q_B$ vs.\ $q_A$ \citep{gretton2012kernel}. Concretely,
$$
    \Pi^{\infty}(\bm{x}) \;=\; \mathrm{MMD}[\HH,q_B,q_A] \cdot f^{*}(\bm{x}),
$$
where $f^{*} \in \HH$, $\|f^{*}\|_{\HH}=1$ is a witness function that attains
$\sup_{\|f\|_{\HH}\le 1}[\mathbb{E}_B f - \mathbb{E}_A f]$.
The empirical KAS score $\Pi$ in \eqref{eq:pi-kernel} is the standard plug-in
estimator of $\Pi^{\infty}$, and Algorithm~\ref{alg:rbf} performs gradient ascent
on this empirical witness.
\end{proposition}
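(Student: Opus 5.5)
The plan is to reduce everything to the reproducing property applied to the mean embeddings, and then to identify the witness via Cauchy--Schwarz in $\HH$.

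\emph{Step 1: write $\Pi^{\infty}$ as an RKHS evaluation.} First fix the mean embeddings as well-defined elements of $\HH$. Assume $\mathbb{E}_{q}[\sqrt{k(\bm{a},\bm{a})}]<\infty$ for $q\in\{q_A,q_B\}$; this is automatic for bounded kernels such as the Gaussian RBF. Then $\varphi$ is Bochner integrable, and $\mu_A=\mathbb{E}_{q_A}[\varphi(\bm{a})]$ and $\mu_B=\mathbb{E}_{q_B}[\varphi(\bm{b})]$ lie in $\HH$. The key identity is $\inner{f}{\mu_B}_{\HH}=\mathbb{E}_{q_B}[f(\bm{b})]$ for all $f\in\HH$. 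To prove it, use that $f\mapsto\inner{f}{\varphi(\bm{b})}$ is continuous and linear, so it commutes with the Bochner integral, and then apply the reproducing property $f(\bm{b})=\inner{f}{\varphi(\bm{b})}$. Taking $f=\varphi(\bm{x})$ gives $\mathbb{E}_{q_B}[k(\bm{x},\bm{b})]=\inner{\varphi(\bm{x})}{\mu_B}$, and the same holds for $A$. Therefore
\begin{equation*}
\Pi^{\infty}(\bm{x})=\inner{\varphi(\bm{x})}{\mu_B-\mu_A}_{\HH}=(\mu_B-\mu_A)(\bm{x}).
\end{equation*}
In words, $\Pi^{\infty}$ is the element $\mu_B-\mu_A\in\HH$, viewed as a function on $\RR^d$.

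\emph{Step 2: identify the witness.} By the identity from Step 1 applied to both distributions, $\mathbb{E}_B f-\mathbb{E}_A f=\inner{f}{\mu_B-\mu_A}$. Cauchy--Schwarz then bounds this by $\norm{\mu_B-\mu_A}_{\HH}$ whenever $\norm{f}\le1$, with equality at $f^{*}=(\mu_B-\mu_A)/\norm{\mu_B-\mu_A}_{\HH}$ if $\mu_B\neq\mu_A$. This yields both $\mathrm{MMD}[\HH,q_B,q_A]=\norm{\mu_B-\mu_A}_{\HH}$ and the fact that $f^{*}$ attains the supremum. Rescaling gives $\Pi^{\infty}=\mu_B-\mu_A=\mathrm{MMD}\cdot f^{*}$, and evaluating pointwise gives the claimed identity.

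If $\mu_A=\mu_B$, then $\mathrm{MMD}=0$ and $\Pi^{\infty}\equiv0$. In that case any unit-norm $f$ attains the supremum, which equals $0$, so the identity holds for every choice of $f^{*}$. This degenerate case is the one spot where the statement's "a witness function" wording matters, and I will note it explicitly.

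\emph{Step 3: plug-in estimator and gradient ascent.} Replace $q_A$ and $q_B$ by the empirical measures on $\{\bm{a}_i\}$ and $\{\bm{b}_j\}$. Then $\mu_A$ and $\mu_B$ become the sample means of $\varphi$, and $\mu_B-\mu_A$ becomes exactly $\bm{v}_{\varphi}$ from \eqref{eq:phi-dim}. The expression in Step 1 therefore reduces to \eqref{eq:pi-kernel}, which shows that $\Pi$ is the plug-in estimator. If wanted, I can add consistency: by the standard Hilbert-space law of large numbers (or the concentration bounds of \citet{gretton2012kernel} for bounded $k$), $\norm{\bm{v}_{\varphi}-(\mu_B-\mu_A)}_{\HH}\to0$. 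Since $|\Pi(\bm{x})-\Pi^{\infty}(\bm{x})|\le\sqrt{k(\bm{x},\bm{x})}\,\norm{\bm{v}_{\varphi}-(\mu_B-\mu_A)}$, the convergence is uniform for bounded kernels. Finally, Algorithm~\ref{alg:rbf} ascends $\nabla\Pi$, possibly normalized as in Remark~\ref{rem:normalized-update}, together with a proximal term. Since $\Pi$ equals the empirical MMD times the empirical unit witness, and the MMD is a positive constant when nonzero, ascending $\Pi$ is ascending the empirical witness; with normalization the two directions coincide exactly.

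The main obstacle is Step 1: justifying the interchange of expectation and inner product, which needs the integrability assumption. I would state that assumption up front, noting that it is trivially satisfied by the Gaussian RBF kernel used in the paper. Everything after that is Cauchy--Schwarz and bookkeeping.
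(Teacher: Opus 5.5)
Your proposal is correct and takes essentially the same route as the paper. The paper gives no separate formal proof; it argues that $\bm{v}_{\varphi}$ and its population analogue $\mu_B-\mu_A$ are differences of kernel mean embeddings, so that $\Pi^{\infty}(\bm{x})=\inner{\varphi(\bm{x})}{\mu_B-\mu_A}_{\HH}$, and then cites Gretton et al.\ for $\mathrm{MMD}=\norm{\mu_B-\mu_A}_{\HH}$ with witness $(\mu_B-\mu_A)/\norm{\mu_B-\mu_A}_{\HH}$. You make explicit the reproducing-property and Cauchy--Schwarz steps, plus the integrability and degenerate-case caveats, that the paper leaves to that citation; if you want the identity for \emph{every} maximizer, note that the equality case of Cauchy--Schwarz makes $f^{*}$ unique when $\mu_A\neq\mu_B$.
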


\label{rem:mmd-consequences}
Identifying $\Pi$ with the MMD witness imports a couple of useful properties from
kernel two-sample testing.
\textbf{(i) Characteristic kernels and signal existence.}
When $k$ is \emph{characteristic} on $\RR^d$ (true for the Gaussian RBF and
Laplace kernels \citep{gretton2012kernel,sriperumbudur2010hilbert}),
$\Pi^{\infty}\equiv 0$ if and only if $q_A=q_B$. In contrast, a linear kernel
yields $\Pi^{\infty}\equiv 0$ iff the \emph{means} of the two distributions
agree, so DiM has no population signal whenever source and target
activations share a centroid but differ in higher moments (e.g. multimodal
targets). This formalizes DiM's principal failure mode and motivates the use
of nonlinear kernels in KAS.
\textbf{(ii) Bandwidth selection.}
The MMD median heuristic, setting $\sigma$ to the median pairwise Euclidean
distance in $A\cup B$, gives a parameter-free, well-studied default for
the RBF bandwidth \citep{gretton2012kernel}, which we will also adopt for our experiments.
 
\begin{figure*}[t]
\centering
\begin{minipage}[t]{0.46\textwidth}
\begin{algorithm}[H]
\caption{DiM Steering \citep{arditi2024refusal}}
\label{alg:dim}
\begin{algorithmic}[1]
\Require Activation $\bm{x}_0 \in \RR^d$; source set $A = \{\bm{a}_i\}_{i=1}^{n_A}$; target set $B = \{\bm{b}_j\}_{j=1}^{n_B}$; strength $\alpha$
\Statex
\State \textbf{Compute steering vector:}
\Statex \quad $\bm{v} \gets \frac{1}{n_B}\sum_{j=1}^{n_B} \bm{b}_j - \frac{1}{n_A}\sum_{i=1}^{n_A} \bm{a}_i$
\Statex
\State \textbf{Update activation:}
\Statex \quad $\bm{x} \gets \bm{x}_0 + \alpha\, \bm{v}$
\Statex
\State \Return steered activation $\bm{x}$
\end{algorithmic}
\end{algorithm}
\end{minipage}
\hfill
\begin{minipage}[t]{0.52\textwidth}
\begin{algorithm}[H]
\caption{RBF-Kernelized Activation Steering}
\label{alg:rbf}
\begin{algorithmic}[1]
\Require Activation $\bm{x}_0$; source $A = \{\bm{a}_i\}_{i=1}^{n_A}$; target $B = \{\bm{b}_j\}_{j=1}^{n_B}$; bandwidth $\sigma$; step size $\eta$; regulariser $\lambda$; iterations $T$
\Statex
\State Initialise $\bm{x} \gets \bm{x}_0$
\For{$t = 1, \ldots, T$}
    \State \textbf{Kernel-gradient direction:}
    \Statex \quad $ \bm{g} \gets \frac{1}{\sigma^2}\!\left[\frac{1}{n_B}\sum_{j=1}^{n_B} k(\bm{x}, \bm{b}_j)(\bm{b}_j - \bm{x}) \right.$
    \Statex \quad \phantom{$\bm{g} \gets$} $\left. - \frac{1}{n_A}\sum_{i=1}^{n_A} k(\bm{x}, \bm{a}_i)(\bm{a}_i - \bm{x})\right]$
    \State $\bm{g} \gets \bm{g}-\lambda(\bm{x}-\bm{x}_0)$
    \State $\bm{x} \gets \bm{x}+\eta\,\bm{g}/\norm{\bm{g}}$
\EndFor
\State \Return steered activation $\bm{x}$
\end{algorithmic}
\end{algorithm}
\end{minipage}
\caption{Side-by-side comparison of global linear steering (DiM, Algorithm~\ref{alg:dim}) and locally adaptive RBF-kernelised activation steering (Algorithm~\ref{alg:rbf}). DiM applies the same translation to every activation; KAS adapts the update direction to the local geometry around $\bm{x}$.}
\label{fig:algos}
\end{figure*}
 
\subsection{RBF Kernelized Activation Steering}
\label{sec:rbf}
 
While prior activation-steering methods are restricted to linear operations in activation space, our framework supports arbitrary differentiable kernels. We now develop the case of the Gaussian RBF kernel, which we treat as our default instantiation.
 
We adopt the radial basis function kernel
\begin{equation}\label{eq:rbf}
    k(\bm{x}, \bm{y})
    \;=\; \exp\!\left(-\frac{\norm{\bm{x}-\bm{y}}^2}{2\sigma^2}\right),
\end{equation}
where $\sigma > 0$ is the bandwidth. Unlike the linear kernel, \eqref{eq:rbf} induces a locally adaptive geometry: similarity decays smoothly with Euclidean distance, so reference points only exert influence within an effective radius $\sim \sigma$. The gradient with respect to $\bm{x}$ admits the closed form
\begin{equation}\label{eq:rbf-grad}
    \nabla_{\bm{x}}\, k(\bm{x}, \bm{y})
    \;=\; \frac{1}{\sigma^2}\, k(\bm{x}, \bm{y})\,(\bm{y} - \bm{x}).
\end{equation}
Substituting \eqref{eq:rbf-grad} into the update rule \eqref{eq:kas-grad} yields a nonlinear, data-adaptive steering field: each reference activation contributes a force aligned with $(\bm{y} - \bm{x})$ and weighted by its RBF similarity to $\bm{x}$, producing localized attraction toward target examples and localized repulsion from source examples. We summarize the resulting procedure in Algorithm~\ref{alg:rbf}. This mechanism cannot be reduced to a single fixed steering vector as the effective direction at $\bm{x}$ is a similarity-weighted resultant of $n_A + n_B$ pairwise forces, and changes as $\bm{x}$ traverses representation space. 

\textbf{One-Step Ascent Guarantees.}
We now give sufficient step-size conditions under which the normalized-gradient update used in Algorithm~\ref{alg:rbf} strictly improves the KAS objective $F$. The result follows from the standard ascent lemma, with an explicit kernel-dependent bound for the RBF case in Corollary~\ref{cor:rbf-stepsize}. All proofs are deferred to Appendix~\ref{app:proofs}.

\begin{theorem}[One-Step Ascent]
\label{thm:one-step-ascent}
Let $F(\bm{x})=\Pi(\bm{x})-\tfrac{\lambda}{2}\norm{\bm{x}-\bm{x}_0}^2$, where $\Pi$ is induced by a twice-differentiable kernel $k$ via \eqref{eq:pi-kernel}. Assume $\norm{\nabla_{\bm{x}}^2 k(\bm{x},\bm{y})}_{\mathrm{op}}\le H$ for all $\bm{x},\bm{y}\in\RR^d$. For any iterate $\bm{x}_t$ with $\nabla F(\bm{x}_t)\neq 0$, consider the normalized-gradient update $\bm{x}_{t+1}=\bm{x}_t+\eta\,\nabla F(\bm{x}_t)/\norm{\nabla F(\bm{x}_t)}$. Then $F(\bm{x}_{t+1})>F(\bm{x}_t)$ whenever $0<\eta<\norm{\nabla F(\bm{x}_t)}/(H+\lambda/2)$.
\end{theorem}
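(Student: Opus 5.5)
We will use the standard ascent (descent) lemma for functions with Lipschitz gradient, applied to $-F$. The first step is a uniform curvature bound for $F$. By \eqref{eq:pi-kernel},
\[
\nabla^2\Pi(\bm{x})=\frac{1}{n_B}\sum_{j\in[n_B]}\nabla_{\bm{x}}^2 k(\bm{x},\bm{b}_j)-\frac{1}{n_A}\sum_{i\in[n_A]}\nabla_{\bm{x}}^2 k(\bm{x},\bm{a}_i).
\]
Each Hessian has operator norm at most $H$, so the triangle inequality gives $\norm{\nabla^2\Pi(\bm{x})}_{\mathrm{op}}\le H+H=2H$. The factor of two comes from the two averages. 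The proximal term contributes the Hessian $-\lambda I$. Hence $\norm{\nabla^2 F(\bm{x})}_{\mathrm{op}}\le 2H+\lambda$ for every $\bm{x}$, and $\nabla F$ is $L$-Lipschitz with $L:=2H+\lambda$.

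Next, we apply Taylor's theorem with integral remainder along the segment from $\bm{x}_t$ to $\bm{x}_{t+1}=\bm{x}_t+\eta\bm{u}$, where $\bm{u}=\nabla F(\bm{x}_t)/\norm{\nabla F(\bm{x}_t)}$ is a unit vector. This gives
\[
F(\bm{x}_{t+1})\ge F(\bm{x}_t)+\eta\inner{\nabla F(\bm{x}_t)}{\bm{u}}-\frac{L}{2}\eta^2\norm{\bm{u}}^2=F(\bm{x}_t)+\eta\norm{\nabla F(\bm{x}_t)}-\Bigl(H+\frac{\lambda}{2}\Bigr)\eta^2 .
\]
The constant $L/2=H+\lambda/2$ is exactly the one in the statement, which confirms this route. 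The increment equals $\eta\bigl(\norm{\nabla F(\bm{x}_t)}-(H+\lambda/2)\eta\bigr)$. It is strictly positive whenever $0<\eta<\norm{\nabla F(\bm{x}_t)}/(H+\lambda/2)$, which proves the claim.

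The only delicate step is the curvature constant. A careless bound would use $H$ for $\nabla^2\Pi$, since the weights sum to one within each set, and would then yield the threshold $\norm{\nabla F}/(H/2+\lambda/2)$, which does not match the statement. The signed difference of two averages has total absolute weight $2$, so $2H$ is the correct worst-case bound. Halving it in the quadratic remainder produces the stated $H+\lambda/2$.

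We also need to justify the second-order expansion itself. Since $k$ is twice differentiable, $F$ is too. We will use the integral form $F(\bm{x}_{t+1})=F(\bm{x}_t)+\eta\inner{\nabla F(\bm{x}_t)}{\bm{u}}+\eta^2\int_0^1(1-s)\,\bm{u}^\top\nabla^2F(\bm{x}_t+s\eta\bm{u})\bm{u}\,ds$. If continuity of the Hessian is not assumed, we will instead use the mean-value form along the segment. In both forms, the remainder is bounded below by $-\tfrac{L}{2}\eta^2$.
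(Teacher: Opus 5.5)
Your proposal is correct and follows the paper's proof essentially step for step: bound $\norm{\nabla^2\Pi(\bm{x})}_{\mathrm{op}}\le 2H$ by the triangle inequality, set $L=2H+\lambda$, apply the quadratic lower bound (ascent lemma) to the normalized step with $\norm{\bm{x}_{t+1}-\bm{x}_t}=\eta$, and read off $\eta<2\norm{\nabla F(\bm{x}_t)}/L$. The only cosmetic difference is that you bound the second-order Taylor remainder directly. The paper instead first derives $L$-Lipschitzness of $\nabla F$ via the fundamental theorem of calculus and then integrates along the segment.
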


For comparison, the raw-gradient update $\bm{x}_{t+1}=\bm{x}_t+\eta\nabla F(\bm{x}_t)$ strictly increases $F$ whenever $0<\eta<(H+\lambda/2)^{-1}$.

\begin{corollary}[One-step ascent for RBF steering]
\label{cor:rbf-stepsize}
Suppose the RBF kernel \eqref{eq:rbf} is used and that the reference and iterated activations lie in a Euclidean ball of radius $R$. Then the normalized update in Theorem~\ref{thm:one-step-ascent} strictly increases $F$ whenever $0<\eta<\frac{\norm{\nabla F(\bm{x}_t)}}{\frac{4R^2}{\sigma^4}+\frac{1}{\sigma^2}+\frac{\lambda}{2}}$.
\end{corollary}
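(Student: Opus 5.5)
The plan is to invoke Theorem~\ref{thm:one-step-ascent} directly, so everything reduces to bounding $\norm{\nabla_{\bm{x}}^2 k(\bm{x},\bm{y})}_{\mathrm{op}}$ for the RBF kernel by $H = \frac{4R^2}{\sigma^4}+\frac{1}{\sigma^2}$. The bound only needs to hold at the points the ascent argument actually visits. In the proof of Theorem~\ref{thm:one-step-ascent}, the Hessian bound enters through a second-order Taylor expansion of $\Pi$ along the segment from $\bm{x}_t$ to $\bm{x}_{t+1}$, paired against reference points $\bm{y}\in A\cup B$. The hypothesis that reference and iterated activations lie in a ball of radius $R$ therefore places both endpoints in that ball. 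By convexity of the ball, the whole segment lies there too. So a bound valid for $\bm{x},\bm{y}$ in the ball suffices, and I would state this localization explicitly rather than claim a global $H$.

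\textbf{Step 1 (Hessian formula).} Differentiate \eqref{eq:rbf-grad} once more. Writing $\bm{u}=\bm{y}-\bm{x}$, this gives
\begin{equation*}
\nabla_{\bm{x}}^2 k(\bm{x},\bm{y}) = k(\bm{x},\bm{y})\left(\frac{\bm{u}\bm{u}^\top}{\sigma^4}-\frac{I}{\sigma^2}\right).
\end{equation*}

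\textbf{Step 2 (operator norm).} Since $0<k\le 1$, the triangle inequality yields
\begin{equation*}
\norm{\nabla_{\bm{x}}^2 k}_{\mathrm{op}} \le \frac{\norm{\bm{u}}^2}{\sigma^4}+\frac{1}{\sigma^2}.
\end{equation*}
Then $\norm{\bm{u}}\le \norm{\bm{x}}+\norm{\bm{y}}\le 2R$ gives $\norm{\bm{u}}^2\le 4R^2$, which is exactly the stated $H$. A sharper bound is available, because the eigenvalues are $k(\norm{\bm{u}}^2/\sigma^4-1/\sigma^2)$ and $-k/\sigma^2$, but the crude triangle bound matches the corollary, so I will use it.

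\textbf{Step 3 (conclude).} Plug $H$ into Theorem~\ref{thm:one-step-ascent} to obtain the threshold $\norm{\nabla F(\bm{x}_t)}/(4R^2/\sigma^4+1/\sigma^2+\lambda/2)$. Strictly, I should re-run the theorem's ascent-lemma argument, not cite it verbatim, since the theorem assumes a global Hessian bound. The argument goes as follows. Averaging the kernel Hessians over $A$ and $B$ with weights $1/n_B$ and $-1/n_A$ gives
\begin{equation*}
\norm{\nabla^2\Pi}_{\mathrm{op}}\le \frac{1}{n_B}\sum_j H + \frac{1}{n_A}\sum_i H = 2H.
\end{equation*}
This means I must check how the theorem's constant absorbs this factor. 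The stated constant $H+\lambda/2$ suggests the Taylor remainder is $\tfrac12\cdot 2H\eta^2 + \tfrac{\lambda}{2}\eta^2$, consistent with that accounting. So I would reproduce the remainder computation on the segment:
\begin{equation*}
F(\bm{x}_{t+1})\ge F(\bm{x}_t)+\eta\norm{\nabla F(\bm{x}_t)}-(H+\lambda/2)\eta^2,
\end{equation*}
which is positive for $\eta$ in the stated range.

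The main obstacle is this localization subtlety: the RBF Hessian is not globally bounded in the form $4R^2/\sigma^4$, so the proof must justify applying the theorem only on the segment within the ball. A global alternative would be $\sup_u u^2 e^{-u^2/2\sigma^2}/\sigma^4=2/(e\sigma^2)$, which I would mention as a remark but not use.
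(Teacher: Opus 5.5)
Your proposal is correct and is essentially the paper's proof. It uses the same Hessian $\nabla_{\bm{x}}^2 k=\frac{k}{\sigma^2}\bigl(\frac{\bm{u}\bm{u}^\top}{\sigma^2}-I\bigr)$ and the same bound $\frac{\norm{\bm{u}}^2}{\sigma^4}+\frac{1}{\sigma^2}\le\frac{4R^2}{\sigma^4}+\frac{1}{\sigma^2}$, which the paper reads off the spectrum and you get from the triangle inequality, and it substitutes this into Theorem~\ref{thm:one-step-ascent}. Your explicit localization of the ascent-lemma argument to the segment $[\bm{x}_t,\bm{x}_{t+1}]$ supplies a justification the paper skips, since the theorem as stated assumes a Hessian bound for all $\bm{x},\bm{y}\in\RR^d$.

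Two small remarks. First, for a ball centered at $\bm{c}$, write $\norm{\bm{u}}\le\norm{\bm{x}-\bm{c}}+\norm{\bm{y}-\bm{c}}\le 2R$ rather than implicitly taking the center to be the origin. Second, the localization you call the main obstacle is in fact avoidable. With $s=\norm{\bm{u}}^2/\sigma^2$, the two eigenvalues $e^{-s/2}(s-1)/\sigma^2$ and $-e^{-s/2}/\sigma^2$ are bounded in absolute value by $1/\sigma^2$ for every $\bm{x},\bm{y}$. So $\frac{4R^2}{\sigma^4}+\frac{1}{\sigma^2}$ is already a valid global $H$, and the theorem applies verbatim without using the ball hypothesis at all.
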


In practice, we tune $\eta$ as a hyperparameter because these sufficient bounds can be conservative. If $\nabla F(\bm{x}_t)=0$, the normalized direction is undefined and the algorithm terminates.

\section{Experiments}
\label{sec:experiments}

We evaluate KAS across three settings that exercise different aspects of activation steering: (i) \textit{jailbreak steering} on instruction-tuned LLMs, which trades off attack success against general-capability retention; (ii) \textit{CAA-style contrastive steering} on multiple-choice question answering, which isolates targeted behavioral control on a fixed answer-token position; and (iii) \textit{text-to-image style control} on a diffusion model, which probes whether KAS generalizes beyond language modeling. Sections~\ref{sec:kernel-approximation} and~\ref{sec:ablations} examine kernel approximation and data efficiency, respectively. All experiments are conducted on a single NVIDIA H100 96GB GPU.

\subsection{Jailbreaking LLMs}

\label{sec:jailbreaking}

\textbf{Setup.}
We evaluate \textsc{Gemma-2-9B-IT} \citep{gemmateam2024gemma2improvingopen}, \textsc{Llama-3.1-8B-Instruct} \citep{dubey2024llama3herdmodels}, and \textsc{Qwen2.5-7B-Instruct} \citep{qwen2025qwen25technicalreport}. Following \cite{arditi2024refusal}, we construct reference sets of 128 harmful prompts from \textsc{AdvBench} \citep{zou2023universal}, \textsc{MaliciousInstruct} \citep{huang2023catastrophic}, \textsc{TDC2023} \citep{tdc2023}, and \textsc{HarmBench} \citep{mazeika2024harmbench}, and 128 harmless prompts from \textsc{Alpaca} \citep{alpaca}. Reference, validation, and test sets are disjoint. We select the intervention layer and strength by maximizing ASR on 39 harmful validation prompts, subject to perplexity on 256 Alpaca validation examples being at most $1.05$ times the unsteered value. Final evaluation uses 100 \textsc{JailbreakBench} prompts \citep{chao2024jailbreakbench}, a separate set of 256 Alpaca examples, and \textsc{TinyBenchmarks} \citep{polo2024tinybenchmarksevaluatingllmsfewer}. ASR is measured using \textsc{Llama-Guard-3-8B} \citep{dubey2024llama3herdmodels}; all test metrics are computed at the validation-selected settings.



\begin{table*}[t]
\centering
\caption{Jailbreak steering at validation-selected settings. ASR and \textsc{TinyBenchmarks} scores are percentages; Alpaca perplexity (PPL) is lower-is-better. Entries with $\pm$ report mean and standard deviation over three seeds. Bold denotes the highest mean ASR for each model.}
\label{tab:jailbreak_reformatted}
\scriptsize
\setlength{\tabcolsep}{2.8pt}
\begin{tabular}{lrrrrrrr}
\toprule
Method & ASR & PPL & tArc & tHella & tMMLU & tQA & tWino \\
\midrule
\rowcolor{gray!30}\multicolumn{8}{l}{\textsc{Qwen2.5-7B-Instruct}} \\
ActAdd & 91.00 & 6.70 & 53.82 & 71.62 & 65.12 & 57.98 & 65.71 \\
Curveball & 2.00 & 8.59 & 61.61 & 75.92 & 71.13 & 64.99 & 71.05 \\
CHaRS & 92.33$\pm$1.53 & 7.89$\pm$0.06 & 57.35$\pm$0.06 & 75.62$\pm$0.41 & 72.67$\pm$0.00 & 47.96$\pm$0.06 & 71.31$\pm$0.18 \\
KAS & \textbf{96.00} & 6.60 & 62.57 & 76.97 & 64.16 & 65.79 & 68.30 \\
\midrule
\rowcolor{gray!30}\multicolumn{8}{l}{\textsc{Gemma-2-9B-IT}} \\
ActAdd & 76.00 & 10.17 & 52.88 & 68.57 & 63.85 & 53.67 & 72.25 \\
Curveball & 0.00 & 10.01 & 68.12 & 78.61 & 71.53 & 60.51 & 74.68 \\
CHaRS & 77.33$\pm$0.58 & 10.16$\pm$0.00 & 67.01$\pm$0.00 & 78.94$\pm$0.00 & 73.40$\pm$0.00 & 43.58$\pm$0.10 & 75.29$\pm$0.00 \\
KAS & \textbf{86.00} & 10.03 & 61.42 & 78.80 & 72.38 & 43.96 & 73.81 \\
\midrule
\rowcolor{gray!30}\multicolumn{8}{l}{\textsc{Llama-3.1-8B-Instruct}} \\
ActAdd & 94.00 & 6.43 & 56.18 & 68.97 & 65.53 & 53.30 & 71.13 \\
Curveball & 0.00 & 5.61 & 57.48 & 73.81 & 63.83 & 53.37 & 68.91 \\
CHaRS & \textbf{95.00}$\pm$1.73 & 8.19$\pm$0.01 & 56.26$\pm$1.27 & 79.27$\pm$0.00 & 63.31$\pm$0.00 & 48.88$\pm$0.02 & 71.66$\pm$1.24 \\
KAS & 94.00 & 6.81 & 55.88 & 70.47 & 65.40 & 53.54 & 72.85 \\
\bottomrule
\end{tabular}
\end{table*}

\textbf{Results.} We compare KAS with ActAdd \citep{arditi2024refusal}, Curveball \citep{raval2026curveball}, and CHaRS \citep{abdullaev2026concept}. Table~\ref{tab:jailbreak_reformatted} reports ASR, Alpaca perplexity, and \textsc{TinyBenchmarks} scores. We report mean and standard deviation over three seeds for CHaRS, varying clustering initialization. ActAdd, KAS, and Curveball are deterministic for fixed data and hyperparameters under greedy decoding. KAS attains the highest ASR on Gemma and Qwen, exceeding ActAdd by 10 and 5 percentage points, respectively, with slightly lower perplexity. On Llama, KAS matches ActAdd at 94\% ASR with higher perplexity; CHaRS achieves 95\% ASR with a further increase in perplexity. Several baselines retain higher scores on individual utility tasks but attain substantially lower ASR at their selected settings. KAS thus improves the steering--utility tradeoff in some settings, while remaining competitive on Llama.

Across the tested validation grids, Curveball on Llama attains at most 2.6\% ASR. Curveball on Gemma and Qwen attains high ASR only beyond the validation perplexity cap. These comparisons are restricted to the evaluated configurations\footnote{It is worth noting that Curveball \citep{raval2026curveball} was originally evaluated with substantially larger activation sets (1,500–2,000 samples for KPCA fitting), versus 256 references here. This data-regime mismatch may disadvantage Curveball. We retain it to highlight the challenges of nonlinear steering in low-data settings.}.

\subsection{CAA-style Contrastive Steering on Multiple-choice Question Answering}\label{sec:caa}
\textbf{Setup.}
We adopt the contrastive multiple-choice setup of CAA \citep{rimsky2024steering}. Each pair shares a question and differs only in the appended answer: the positive example selects the behavior-consistent option and the negative example selects the opposite option. Activations are collected at the answer-token position. For each behavior, we retain 50 held-out test questions, remove duplicates and test-set overlaps from the generation pool, sample 50 validation questions, and use the remaining examples as references. Layer and strength are selected by maximizing behavior-consistent answer probability on the validation set; the selected configuration is then evaluated on the test set. Experiments use \textsc{Gemma-2-9B-IT} at layer 22 or 23. We consider no system prompt, a positive system prompt encouraging the behavior, and a negative system prompt discouraging it. Table~\ref{tab:caa-negative-system-prompt} reports the negative-prompt condition, comparing the unsteered model, linear contrastive addition (denoted ActAdd), and the methods in Section~\ref{sec:jailbreaking}. The reported standard deviations follow the same three-seed convention.

\textbf{Results.}
KAS exceeds linear contrastive addition by 9.63, 11.04, and 8.92 percentage points on Corrigibility, Hallucination, and Myopic Reward, respectively. KAS attains the highest probability among the compared methods on AI Coordination, Corrigibility, Hallucination, and Myopic Reward. Survival Instinct and Sycophancy remain close to the unsteered baseline and are treated as near-null effects.

\begin{table}[t]
\centering
\caption{CAA-style steering under a negative system prompt. Entries report the mean probability (\%) of the behavior-consistent answer at validation-selected settings; $\pm$ denotes standard deviation over three seeds. The final two behaviors exhibit near-null effects.}
\label{tab:caa-negative-system-prompt}
\scriptsize
\setlength{\tabcolsep}{3pt}
\begin{tabular}{lrrrrr}
\toprule
Behavior & Unsteered & ActAdd & Curveball & CHaRS & KAS \\
\midrule
AI Coordination & 8.69 & 39.51 & 27.48 & 35.58$\pm$0.11 & \textbf{40.19} \\
Corrigibility & 0.05 & 33.82 & 35.13 & 38.84$\pm$0.08 & \textbf{43.45} \\
Hallucination & 23.97 & 43.97 & 26.04 & 25.83$\pm$0.09 & \textbf{55.01} \\
Myopic Reward & 1.29 & 49.56 & 51.35 & 50.97$\pm$0.10 & \textbf{58.48} \\
Survival Instinct & 43.68 & 43.56 & 43.76 & 43.76$\pm$0.02 & \textbf{43.83} \\
Sycophancy & 53.18 & 53.47 & 53.57 & 53.64$\pm$0.04 & \textbf{53.69} \\
\bottomrule
\end{tabular}
\end{table}

\subsection{Image Generation Style Control}
To evaluate generalization beyond text generation and multiple-choice question answering, KAS is assessed on a text-to-image style control task following \cite{rodriguez2024meanact}, in which diffusion model activations are steered to produce target visual styles.

\textbf{Setup.}
A set of 512 prompts is sampled from the \textsc{COCO} Captions training set \citep{chen2015microsoft} and augmented with style tags generated via \textsc{Llama3-8B-Instruct}. Further construction details are provided in Appendix~\ref{app:image-style-details}. Images are generated using \textsc{Flux.1 [schnell]}~ \citep{flux2024}, with original and style-augmented prompts serving as source and target datasets. KAS was applied sequentially. For evaluation, images were generated from 60 \textsc{COCO} validation prompts across varying intervention strengths, and KAS was compared against Mean-AcT \citep{rodriguez2024meanact}, with 0-shot classification scores employed as a proxy for style induction. Content preservation was quantified via \textsc{CLIPScore} \citep{hessel2021clipscore} between steered outputs and the original prompts.

\begin{figure}
    \centering
    \includegraphics[width=.8\linewidth]{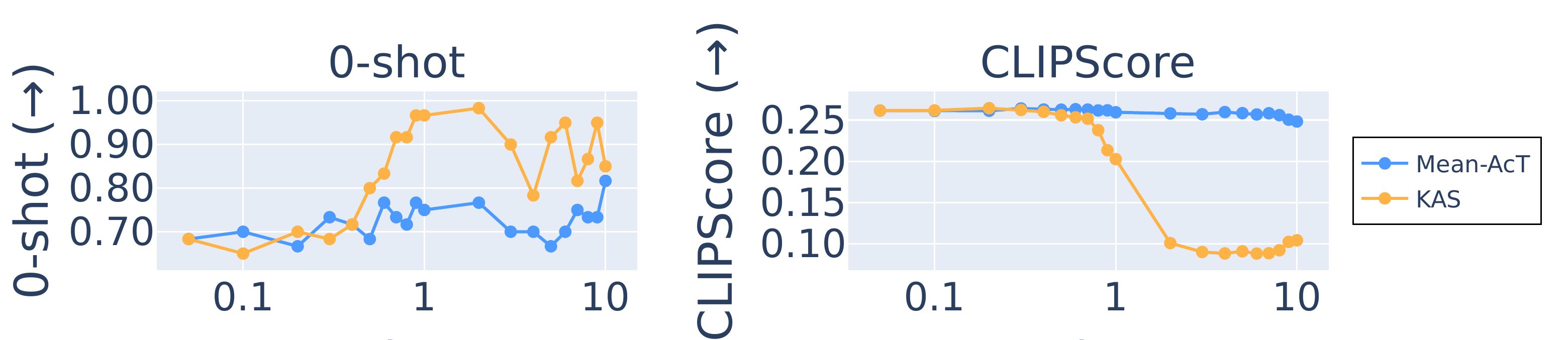}
    \caption{Style induction and content preservation (y-axis) for \emph{cyberpunk} style control under increasing intervention strength $\eta$ (x-axis). 
\textbf{Left:} 0-shot style-classification score, where higher is better. 
\textbf{Right:} \textsc{CLIPScore} with the original prompt, measuring content preservation. 
}
    \label{fig:image-style-control}
\end{figure}

\begin{figure}
    \centering
    \includegraphics[width=\linewidth]{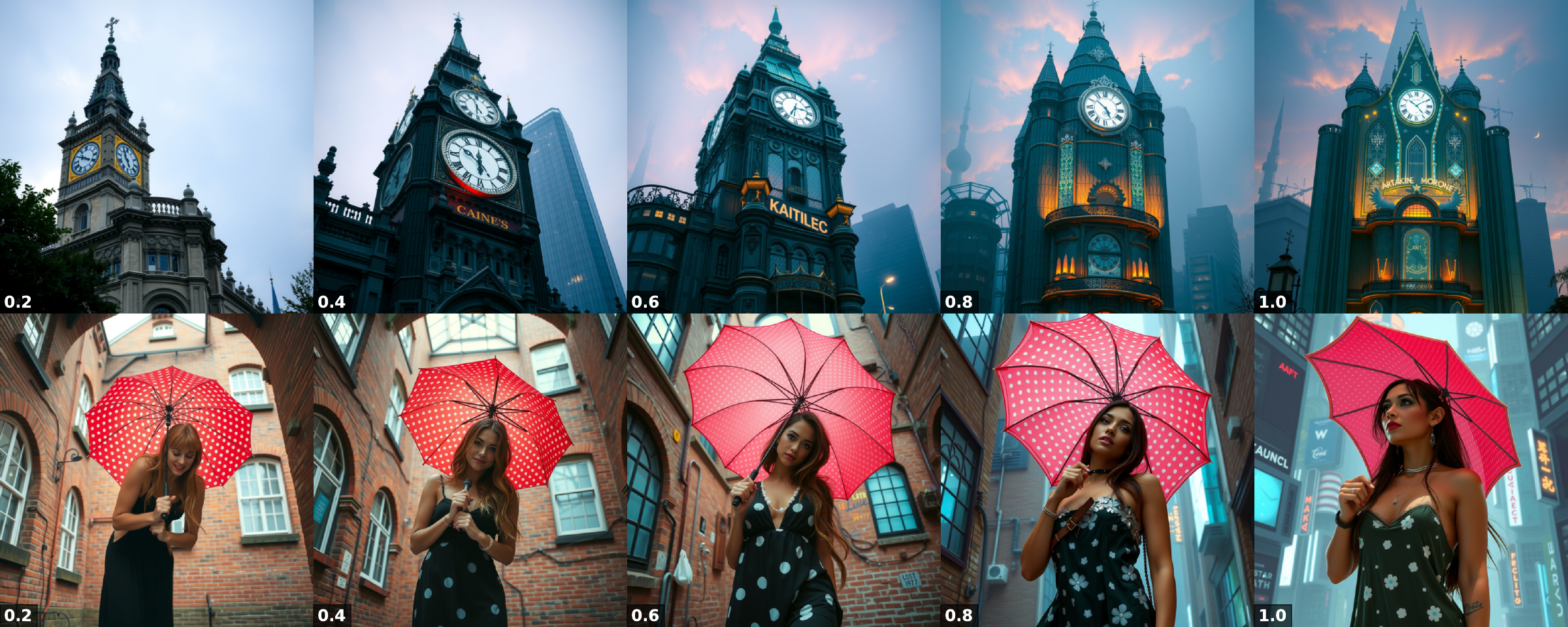}
    \caption{Qualitative examples of KAS style control for the \emph{cyberpunk} style under increasing intervention strengths $\lambda \in \{0.2,0.4,0.6,0.8,1.0\}$. \textbf{Top:} A large tower that has a big clock at top. \textbf{Bottom:} Lady bent over with red polka dot umbrella inside a brick building.}
    \label{fig:image-cyberpunk}
\end{figure}

\textbf{Results.} Figure~\ref{fig:image-style-control} shows the plot of style induction and content preservation for the cyberpunk style. KAS achieves higher 0-shot style-classification scores than Mean-AcT across most intervention strengths. Notably, near $\lambda = 1$, KAS reaches a high zero-shot style score while maintaining a high \textsc{CLIPScore}, indicating effective style induction without reducing prompt alignment. The qualitative results in Figure~\ref{fig:image-cyberpunk} show the same trend: stronger interventions introduce more cyberpunk visual attributes.

\begin{table}[t]
\centering
\caption{Kernel-approximation ablation for jailbreak steering. ASR and utility scores are percentages; PPL is Alpaca perplexity. Nystr{\"o}m-KAS uses 64 landmarks, with mean and standard deviation over three sampling seeds.}
\label{tab:nystrom-jailbreak}
\scriptsize
\setlength{\tabcolsep}{2.8pt}
\begin{tabular}{lrrrrrrr}
\toprule
Variant & ASR & PPL & tArc & tHella & tMMLU & tQA & tWino \\
\midrule
\rowcolor{gray!30}\multicolumn{8}{l}{\textsc{Qwen2.5-7B-Instruct}} \\
KAS & 96.00 & 6.60 & 62.57 & 76.97 & 64.16 & 65.79 & 68.30 \\
Nystr{\"o}m-KAS & 22.33$\pm$0.47 & 6.54$\pm$0.06 & 64.58$\pm$0.00 & 78.52$\pm$0.00 & 74.36$\pm$0.00 & 52.47$\pm$0.15 & 72.85$\pm$0.54 \\
\midrule
\rowcolor{gray!30}\multicolumn{8}{l}{\textsc{Gemma-2-9B-IT}} \\
KAS & 86.00 & 10.03 & 61.42 & 78.80 & 72.38 & 43.96 & 73.81 \\
Nystr{\"o}m-KAS & 54.67$\pm$0.94 & 9.99$\pm$0.06 & 63.15$\pm$1.75 & 78.26$\pm$1.15 & 72.03$\pm$0.28 & 43.88$\pm$0.39 & 73.69$\pm$0.20 \\
\midrule
\rowcolor{gray!30}\multicolumn{8}{l}{\textsc{Llama-3.1-8B-Instruct}} \\
KAS & 94.00 & 6.81 & 55.88 & 70.47 & 65.40 & 53.54 & 72.85 \\
Nystr{\"o}m-KAS & 3.00$\pm$0.82 & 6.46$\pm$0.01 & 62.74$\pm$0.00 & 82.17$\pm$0.37 & 62.98$\pm$0.19 & 52.84$\pm$0.09 & 72.29$\pm$0.75 \\
\bottomrule
\end{tabular}
\end{table}

\begin{wraptable}{R}{0.35\columnwidth}
\centering
\caption{Kernel-approximation ablation for CAA-style steering under a negative system prompt. Nystr{\"o}m-KAS uses 64 landmarks and three sampling seeds.}
\label{tab:nystrom-caa}
\scriptsize
\begin{tabular}{lrr}
\toprule
Behavior & KAS & Nystr{\"o}m-KAS \\
\midrule
AI Coordination & 40.19 & \textbf{42.18}$\pm$1.39  \\
Corrigibility & 43.45 & \textbf{51.80}$\pm$1.86  \\
Hallucination & \textbf{55.01} & 54.13$\pm$0.09  \\
Myopic Reward & \textbf{58.48} & 55.26$\pm$0.86  \\
Survival Instinct & \textbf{43.83} & 43.76$\pm$0.03  \\
Sycophancy & \textbf{53.69} & 53.23$\pm$0.06  \\
\bottomrule
\end{tabular}
\vspace{-2em}
\end{wraptable}

\subsection{Ablation: Kernel Approximation}
\label{sec:kernel-approximation}
\paragraph{Setup.} We examine the effect of approximating the KAS kernel using the Nystr{\"o}m method \citep{williams2000using} with 64 landmarks sampled without replacement from the reference sets. Appendix~\ref{app:nystrom} derives the approximate score and steering update. We compare this variant, denoted Nystr{\"o}m-KAS, with exact KAS under the reference, validation, and test protocols of Sections~\ref{sec:jailbreaking} and~\ref{sec:caa}. Each variant uses its own validation-selected layer and strength; the comparison therefore measures performance after selection rather than approximation error at a fixed intervention. Nystr{\"o}m results report mean and standard deviation over three landmark-sampling seeds.

\paragraph{Results.}
Table~\ref{tab:nystrom-jailbreak} shows that the approximation reduces jailbreak ASR on all three models, with slightly lower Alpaca perplexity. In contrast, Table~\ref{tab:nystrom-caa} shows that the approximation improves AI Coordination and Corrigibility, while exact KAS achieves higher probabilities on Hallucination and Myopic Reward. Both variants exhibit near-null effects on Survival Instinct and Sycophancy. Thus, the effect of kernel approximation depends on the task; these results do not establish a uniform substitute for exact kernel evaluation or quantify computational savings.

\subsection{Ablation: Jailbreaking from a Single Contrastive Pair}
\label{sec:ablations}

\label{sec:single-pair-jailbreak}

\begin{figure*}[t]
    \centering
    \includegraphics[width=1.0\textwidth]{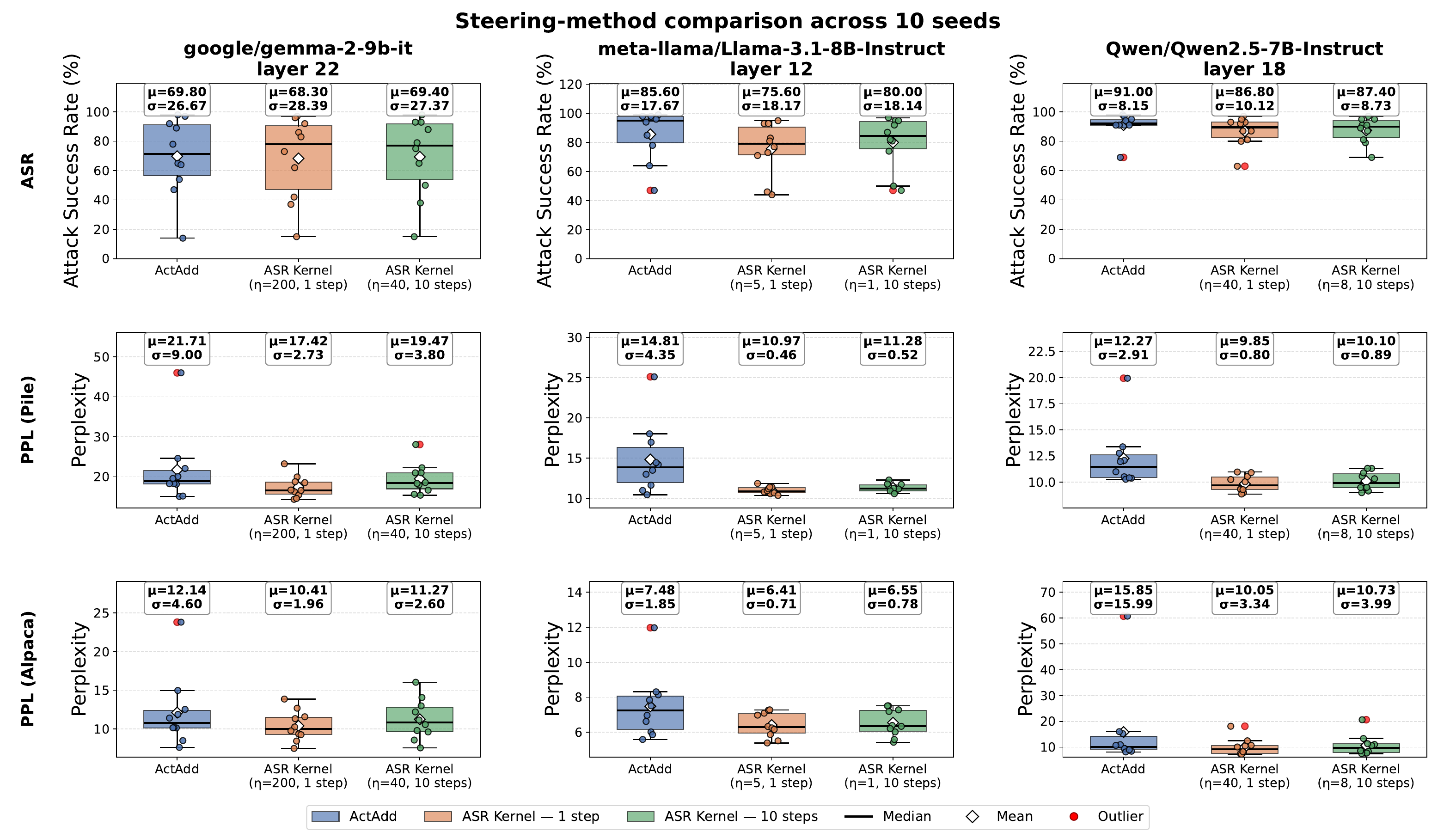}
    \caption{
    Single-pair jailbreak steering across three instruction-tuned models. Box plots report attack success rate (ASR), Pile perplexity, and Alpaca perplexity over ten randomly sampled harmful--harmless contrastive prompt pairs. Steering is applied at layer 22 for \textsc{Gemma-2-9B-IT}, layer 12 for \textsc{Llama-3.1-8B-Instruct}, and layer 18 for \textsc{Qwen2.5-7B-Instruct}. KAS is evaluated with one gradient step and ten gradient steps using the step sizes shown on the x-axis, and is compared against ActAdd.
    }
    \label{fig:single-pair-all-models}
\end{figure*}

We further evaluate whether jailbreak steering can be induced from an extremely limited contrastive set. In this ablation, each steering direction is extracted from only one harmful--harmless prompt pair. We repeat this procedure over ten randomly sampled contrastive pairs and evaluate using the same protocol as in Section~\ref{sec:jailbreaking}. To quantify the extent each steering method maintains general capabilities of the model, we also measure perplexity on the \textsc{Pile} \citep{gao2020pile} and \textsc{Alpaca} \citep{alpaca} datasets. This setting provides a stress test for data efficiency: a successful method must be able to recover a useful behavioral direction from a minimal number of contrastive examples while avoiding severe degradation of generation quality.

Figure~\ref{fig:single-pair-all-models} visualizes the distribution of one-shot jailbreak steering
performance across ten randomly sampled contrastive prompt pairs. The main trend is that KAS
preserves general utility substantially better than ActAdd while remaining competitive in ASR. On
\textsc{Gemma-2-9B-IT}, KAS achieves ASR comparable to ActAdd, with both one-step and ten-step
variants attaining similar mean ASR while reducing Pile and Alpaca perplexity. On
\textsc{Qwen2.5-7B-Instruct}, KAS remains comparable to ActAdd in ASR, but yields markedly lower
perplexity on both utility corpora. On \textsc{Llama-3.1-8B-Instruct}, ActAdd obtains the highest
ASR, yet KAS substantially reduces perplexity. Overall, KAS remains effective under this severe data constraint while improving generation stability relative to a global linear intervention.


\vspace{-1em}
\section{Conclusion}
\label{sec:conclusion}
\vspace{-1em}
 
We introduced \emph{Kernelized Activation Steering} (KAS), a unifying framework that recasts activation steering as optimization in a reproducing kernel Hilbert space. By expressing the steering objective entirely through kernel evaluations, KAS produces an implicit, activation-dependent steering field that is locally adaptive to the geometry of the representation manifold, while strictly generalizing the classical Difference-in-Means construction, which is recovered exactly under a linear kernel. We view KAS as a step toward a more principled, geometry-aware approach to inference-time control of generative models, and an opening for further work on kernel design and adaptive layer selection. A limitation of KAS is that the intervention applies a uniform number of gradient ascent steps to every token, which is computationally wasteful for tokens already close to the target manifold and potentially insufficient for harder ones. A token-adaptive compute schedule would likely improve both efficiency and stability. While RBF is shown to be a reliable default kernel choice, a principled method to choose the kernel function given a specific steering task or requirements remains a promising open direction to explore in future work. 

\newpage
\bibliography{neurips_2026}
\bibliographystyle{plain}

\newpage 
\appendix
\begin{center}
{\bf \Large{Supplement to `Kernelized Activation Steering''}}
\end{center}

\DoToC

\section{Technical Derivations and Proofs}

\subsection{Proofs}\label{app:proofs}

\begin{proof}[Proof of Theorem~\ref{thm:one-step-ascent}]
Write $\Pi$ in the form implied by \eqref{eq:pi-kernel},
$$
    \Pi(\bm{x}) \;=\; \frac{1}{n_B}\sum_{j=1}^{n_B} k(\bm{x},\bm{b}_j) \;-\; \frac{1}{n_A}\sum_{i=1}^{n_A} k(\bm{x},\bm{a}_i).
$$

First, we upper bound the norm of $\nabla^2 \Pi(\bm{x})$. Differentiation commutes with finite sums, so
$$
    \nabla^2 \Pi(\bm{x}) \;=\; \frac{1}{n_B}\sum_{j} \nabla^2_{\bm{x}} k(\bm{x},\bm{b}_j) \;-\; \frac{1}{n_A}\sum_{i} \nabla^2_{\bm{x}} k(\bm{x},\bm{a}_i).
$$
The operator norm is a norm on the space of symmetric matrices, so subadditivity (triangle inequality) and absolute homogeneity give
$$
    \norm{\nabla^2 \Pi(\bm{x})}_{\mathrm{op}}
    \;\le\; \frac{1}{n_B}\!\sum_{j} \norm{\nabla^2_{\bm{x}} k(\bm{x},\bm{b}_j)}_{\mathrm{op}}
    + \frac{1}{n_A}\!\sum_{i} \norm{\nabla^2_{\bm{x}} k(\bm{x},\bm{a}_i)}_{\mathrm{op}}.
$$
By hypothesis each summand is at most $H$, so
$$
    \norm{\nabla^2 \Pi(\bm{x})}_{\mathrm{op}} \;\le\; H + H \;=\; 2H, \qquad \forall \bm{x} \in \RR^d.
$$

Now note that since $\nabla^2 F(\bm{x}) = \nabla^2 \Pi(\bm{x}) - \lambda I$,
$$
    \norm{\nabla^2 F(\bm{x})}_{\mathrm{op}}
    \;\le\; \norm{\nabla^2 \Pi(\bm{x})}_{\mathrm{op}} + \lambda
    \;\le\; 2H + \lambda \;=:\; L.
$$
Hence $\nabla F$ is $L$-Lipschitz as for any $\bm{x},\bm{y}$, the fundamental theorem of calculus (FTC) gives
$$
    \nabla F(\bm{y}) - \nabla F(\bm{x})
    \;=\; \int_0^1 \nabla^2 F\bigl(\bm{x} + t(\bm{y}-\bm{x})\bigr)(\bm{y}-\bm{x})\,dt,
$$
and taking norms,
\begin{equation}
    \norm{\nabla F(\bm{y}) - \nabla F(\bm{x})}
    \;\le\; \int_0^1 \norm{\nabla^2 F\bigl(\bm{x} + t(\bm{y}-\bm{x})\bigr)}_{\mathrm{op}}\,dt \cdot \norm{\bm{y}-\bm{x}}
    \;\le\; L \norm{\bm{y}-\bm{x}}. \label{eq:F-lipschitz}
\end{equation}

Using the FTC again on $F$ along the segment from $\bm{x}$ to $\bm{y}$,
$$
    F(\bm{y}) - F(\bm{x}) - \inner{\nabla F(\bm{x})}{\bm{y}-\bm{x}}
    \;=\; \int_0^1 \inner{\nabla F\bigl(\bm{x}+t(\bm{y}-\bm{x})\bigr) - \nabla F(\bm{x})}{\bm{y}-\bm{x}}\,dt.
$$
Now Cauchy--Schwarz and inequality \eqref{eq:F-lipschitz} give
$$
    \Bigl|F(\bm{y}) - F(\bm{x}) - \inner{\nabla F(\bm{x})}{\bm{y}-\bm{x}}\Bigr|
    \;\le\; \int_0^1 L t \norm{\bm{y}-\bm{x}}^2\,dt
    \;=\; \frac{L}{2}\norm{\bm{y}-\bm{x}}^2,
$$
so in particular
\begin{equation}\label{eq:ascent-lemma}
    F(\bm{y}) \;\ge\; F(\bm{x}) + \inner{\nabla F(\bm{x})}{\bm{y}-\bm{x}} - \frac{L}{2}\norm{\bm{y}-\bm{x}}^2.
\end{equation}

Finally, let $\bm{g}_t:=\nabla F(\bm{x}_t)\neq 0$ and apply \eqref{eq:ascent-lemma} with $\bm{x}=\bm{x}_t$ and $\bm{y}=\bm{x}_{t+1}=\bm{x}_t+\eta\,\bm{g}_t/\norm{\bm{g}_t}$. Since $\norm{\bm{x}_{t+1}-\bm{x}_t}=\eta$ and $\inner{\bm{g}_t}{\bm{x}_{t+1}-\bm{x}_t}=\eta\norm{\bm{g}_t}$, we obtain
$$
F(\bm{x}_{t+1})
\ge
F(\bm{x}_t)
+
\eta\norm{\bm{g}_t}
-
\frac{L\eta^2}{2}.
$$
Therefore, strict ascent is guaranteed whenever $0<\eta<2\norm{\bm{g}_t}/L$. Taking $L=2H+\lambda$ gives the sufficient condition $0<\eta<\norm{\nabla F(\bm{x}_t)}/(H+\lambda/2)$.
\end{proof}

\begin{proof}[Proof of Corollary~\ref{cor:rbf-stepsize}]
Recall $k(\bm{x},\bm{y}) = \exp\!\bigl(-\norm{\bm{x}-\bm{y}}^2/(2\sigma^2)\bigr)$.

Now the proof of Corollary follows from direct computation. Let $\bm{v} := \bm{y} - \bm{x}$. From $\nabla_{\bm{x}} k(\bm{x},\bm{y}) = \sigma^{-2}\bm{v}\,k(\bm{x},\bm{y})$ and the product rule,
$$
    \nabla^2_{\bm{x}} k(\bm{x},\bm{y})
    = \nabla_{\bm{x}}\!\Bigl[\sigma^{-2}\bm{v}\,k(\bm{x},\bm{y})\Bigr]
    = \sigma^{-2}\Bigl[-I\cdot k(\bm{x},\bm{y}) + \bm{v}\,\nabla_{\bm{x}} k(\bm{x},\bm{y})^{\!\top}\Bigr]
    = \frac{k(\bm{x},\bm{y})}{\sigma^2}\!\left[\frac{\bm{v}\bm{v}^{\!\top}}{\sigma^2} - I\right].
$$

Note that the bracketed matrix $M := \sigma^{-2}\bm{v}\bm{v}^{\!\top} - I$ is symmetric. Its spectrum is $\{\sigma^{-2}\norm{\bm{v}}^2 - 1\}$ on $\mathrm{span}(\bm{v})$ together with $\{-1\}$ on $\bm{v}^{\perp}$ (dimension $d-1$). Hence
$$
    \norm{M}_{\mathrm{op}}
    = \max\!\Bigl(\bigl|\sigma^{-2}\norm{\bm{v}}^2 - 1\bigr|,\;1\Bigr)
    \;\le\; \frac{\norm{\bm{v}}^2}{\sigma^2} + 1,
$$
where the inequality uses $|a-1| \le a+1$ for $a \ge 0$ and trivially $1 \le a+1$. Combining with $0 \le k(\bm{x},\bm{y}) \le 1$,
\begin{equation}\label{eq:rbf-hess-bound}
    \norm{\nabla^2_{\bm{x}} k(\bm{x},\bm{y})}_{\mathrm{op}}
    \;\le\; \frac{1}{\sigma^2}\!\left(\frac{\norm{\bm{y}-\bm{x}}^2}{\sigma^2} + 1\right).
\end{equation}

If $\bm{x},\bm{y}$ both lie in a ball of radius $R$, then $\norm{\bm{y}-\bm{x}} \le 2R$, so $\norm{\bm{y}-\bm{x}}^2 \le 4R^2$. Substituting into \eqref{eq:rbf-hess-bound},
$$
    \norm{\nabla^2_{\bm{x}} k(\bm{x},\bm{y})}_{\mathrm{op}}
    \;\le\; \frac{4R^2}{\sigma^4} + \frac{1}{\sigma^2}
    \;=:\; H.
$$

Substituting this value of $H$ into the normalized-step condition of Theorem~\ref{thm:one-step-ascent} gives
$$
0<\eta<
\frac{\norm{\nabla F(\bm{x}_t)}}
{\frac{4R^2}{\sigma^4}+\frac{1}{\sigma^2}+\frac{\lambda}{2}}.
$$
\end{proof}







\subsection{Steering Gradients for Different Kernels}
\label{app:kernel-gradients}

The particular form of the KAS update is determined by the gradient of the chosen kernel with respect to its first argument. Recall that the empirical steering potential is defined as
\begin{align}
\Pi(\bm{x})
=
\frac{1}{n_B}\sum_{j=1}^{n_B} k(\bm{x},\bm{b}_j)
-
\frac{1}{n_A}\sum_{i=1}^{n_A} k(\bm{x},\bm{a}_i).
\label{eq:kernel-potential-appendix}
\end{align}
Whenever the kernel is differentiable with respect to $\bm{x}$, differentiation may be applied term by term, yielding
\begin{align}
\nabla_{\bm{x}}\Pi(\bm{x})
=
\frac{1}{n_B}\sum_{j=1}^{n_B}
\nabla_{\bm{x}} k(\bm{x},\bm{b}_j)
-
\frac{1}{n_A}\sum_{i=1}^{n_A}
\nabla_{\bm{x}} k(\bm{x},\bm{a}_i).
\label{eq:generic-kernel-gradient}
\end{align}
The resulting steering update is therefore
\begin{align}
\bm{x}
\leftarrow
\bm{x}
+
\eta
\left[
\nabla_{\bm{x}}\Pi(\bm{x})
-
\lambda(\bm{x}-\bm{x}_0)
\right].
\label{eq:generic-kas-update-appendix}
\end{align}
We now derive this gradient explicitly for several commonly used kernels.

\textbf{Gaussian RBF kernel.}
Consider first the Gaussian radial basis function kernel
\begin{align}
k(\bm{x},\bm{y})
=
\exp\left(
-\frac{\|\bm{x}-\bm{y}\|^2}{2\sigma^2}
\right),
\end{align}
where $\sigma>0$ denotes the bandwidth. Differentiating with respect to $\bm{x}$ gives
\begin{align}
\nabla_{\bm{x}} k(\bm{x},\bm{y})
&=
k(\bm{x},\bm{y})
\nabla_{\bm{x}}
\left(
-\frac{\|\bm{x}-\bm{y}\|^2}{2\sigma^2}
\right)
\nonumber\\
&=
-\frac{1}{\sigma^2}
k(\bm{x},\bm{y})
(\bm{x}-\bm{y})
\nonumber\\
&=
\frac{1}{\sigma^2}
k(\bm{x},\bm{y})
(\bm{y}-\bm{x}).
\label{eq:rbf-gradient-appendix}
\end{align}
Substitution into \eqref{eq:generic-kernel-gradient} yields
\begin{align}
\nabla_{\bm{x}}\Pi_{\mathrm{RBF}}(\bm{x})
=
\frac{1}{\sigma^2}
\Bigg[\frac{1}{n_B}\sum_{j=1}^{n_B}
k(\bm{x},\bm{b}_j)
(\bm{b}_j-\bm{x}) - \frac{1}{n_A}\sum_{i=1}^{n_A}
k(\bm{x},\bm{a}_i)
(\bm{a}_i-\bm{x})
\Bigg].
\label{eq:rbf-steering-gradient}
\end{align}
Thus each target reference activation contributes an attractive force pointing from $\bm{x}$ toward $\bm{b}_j$, whereas each source activation contributes a repulsive force. The magnitude of each contribution is modulated by its kernel similarity to the current activation, making the resulting steering field local and activation dependent.

\textbf{Degree-two polynomial kernel.}
We next consider the degree-two polynomial kernel
\begin{align}
k(\bm{x},\bm{y})
=
(\bm{x}^{\top}\bm{y}+c)^2,
\label{eq:poly2-kernel}
\end{align}
where $c\geq 0$ is an offset parameter. By the chain rule,
\begin{align}
\nabla_{\bm{x}} k(\bm{x},\bm{y})
&=
2(\bm{x}^{\top}\bm{y}+c)
\nabla_{\bm{x}}(\bm{x}^{\top}\bm{y})
\nonumber\\
&=
2(\bm{x}^{\top}\bm{y}+c)\bm{y}.
\label{eq:poly2-gradient}
\end{align}
Consequently, the corresponding KAS steering field is
\begin{align}
\nabla_{\bm{x}}\Pi_{\mathrm{poly}}(\bm{x})
=
\frac{2}{n_B}
\sum_{j=1}^{n_B}
(\bm{x}^{\top}\bm{b}_j+c)\bm{b}_j-
\frac{2}{n_A}
\sum_{i=1}^{n_A}
(\bm{x}^{\top}\bm{a}_i+c)\bm{a}_i.
\label{eq:poly2-steering-gradient}
\end{align}

The structure of this gradient becomes more transparent when written in terms of empirical moments. Define the empirical means
\begin{align*}
\bm{\mu}_B
=
\frac{1}{n_B}
\sum_{j=1}^{n_B}\bm{b}_j,
\qquad
\bm{\mu}_A
=
\frac{1}{n_A}
\sum_{i=1}^{n_A}\bm{a}_i,
\end{align*}
and the uncentered second-moment matrices
\begin{align*}
M_B
=
\frac{1}{n_B}
\sum_{j=1}^{n_B}
\bm{b}_j\bm{b}_j^{\top},
\qquad
M_A
=
\frac{1}{n_A}
\sum_{i=1}^{n_A}
\bm{a}_i\bm{a}_i^{\top}.
\end{align*}
Using the identity $(\bm{x}^{\top}\bm{y})\bm{y}=\bm{y}\bm{y}^{\top}\bm{x}$,
the target contribution can be rewritten as
\begin{align}
\frac{2}{n_B}
\sum_{j=1}^{n_B}
(\bm{x}^{\top}\bm{b}_j+c)\bm{b}_j
&=
2M_B\bm{x}
+
2c\bm{\mu}_B,
\end{align}
and similarly for the source contribution. Hence,
\begin{align}
\nabla_{\bm{x}}\Pi_{\mathrm{poly}}(\bm{x})
=
2(M_B-M_A)\bm{x}
+
2c(\bm{\mu}_B-\bm{\mu}_A).
\label{eq:poly2-moment-form}
\end{align}

Equation~\eqref{eq:poly2-moment-form} gives a useful interpretation of degree-two polynomial steering. The first term depends on the difference between the second moments of the target and source activation distributions, while the second term is proportional to the ordinary Difference-in-Means direction. Thus the polynomial kernel extends linear steering by incorporating second-order geometric information.

In particular, when $c=0$,
\begin{align}
\nabla_{\bm{x}}\Pi_{\mathrm{poly}}(\bm{x})
=
2(M_B-M_A)\bm{x},
\end{align}
so steering remains possible even when the two reference sets have identical means but different second-order structure. This contrasts with linear DiM, whose population signal vanishes whenever the two distributions have the same mean. When $c>0$, the polynomial field combines this second-order correction with a global mean-difference component.

\textbf{Laplacian kernel.}
Finally, consider the Laplacian kernel
\begin{align}
k(\bm{x},\bm{y})
=
\exp\left(
-\frac{\|\bm{x}-\bm{y}\|_1}{\sigma}
\right),
\label{eq:laplacian-kernel}
\end{align}
where $\sigma>0$. Since
\begin{align}
\|\bm{x}-\bm{y}\|_1
=
\sum_{r=1}^{d}|x_r-y_r|,
\end{align}
its derivative with respect to coordinate $x_r$, whenever $x_r\neq y_r$, is
\begin{align}
\frac{\partial}{\partial x_r}
\|\bm{x}-\bm{y}\|_1
=
\operatorname{sign}(x_r-y_r).
\end{align}
Applying the chain rule therefore gives
\begin{align}
\nabla_{\bm{x}}k(\bm{x},\bm{y})
&=
-\frac{1}{\sigma}
k(\bm{x},\bm{y})
\operatorname{sign}(\bm{x}-\bm{y})
\nonumber\\
&=
\frac{1}{\sigma}
k(\bm{x},\bm{y})
\operatorname{sign}(\bm{y}-\bm{x}),
\label{eq:laplacian-gradient}
\end{align}
where the sign function is applied elementwise.

Substitution into the general KAS gradient gives
\begin{align}
\nabla_{\bm{x}}\Pi_{\mathrm{Lap}}(\bm{x})
=
\frac{1}{\sigma}
\Bigg[
\frac{1}{n_B}
\sum_{j=1}^{n_B}
k(\bm{x},\bm{b}_j)
\operatorname{sign}(\bm{b}_j-\bm{x})-
\frac{1}{n_A}
\sum_{i=1}^{n_A}
k(\bm{x},\bm{a}_i)
\operatorname{sign}(\bm{a}_i-\bm{x})
\Bigg].
\label{eq:laplacian-steering-gradient}
\end{align}

Unlike the RBF kernel, the Laplacian kernel does not scale each coordinate of the steering direction by the magnitude of the displacement $|y_r-x_r|$. Instead, away from nondifferentiable coordinates, each coordinate contributes only through its sign, while the overall strength of a reference point is controlled by the exponentially decaying kernel weight. Consequently, the Laplacian field produces a coordinate-wise directional pull toward target samples and away from source samples.

The Laplacian kernel is not differentiable at coordinates satisfying $x_r=y_r$. At such points, the $\ell_1$ norm admits a subgradient, and we use the conventional choice
\begin{align}
\operatorname{sign}(0)=0.
\end{align}
Accordingly, \eqref{eq:laplacian-gradient} and \eqref{eq:laplacian-steering-gradient} should be interpreted as subgradient expressions at nondifferentiable points.

\subsection{Matrix Form of RBF-Kernel Steering}

For GPU-friendly implementation, matrix form is useful. Let $X = [\bm{x}_1^\top, \ldots, \bm{x}_N^\top]^\top \in \mathbb{R}^{N \times d}$ denote the token activations in a transformer layer, and 
$A \in \mathbb{R}^{n_A \times d}$, $B \in \mathbb{R}^{n_B \times d}$ be the reference activations for source and target datasets, respectively. 
For the RBF kernel 
$$
k(\bm{x}, \bm{y}) = \exp\!\left(-\frac{\|\bm{x}-\bm{y}\|^2}{2\sigma^2}\right),
$$
the pairwise kernel matrices are
$$
K_B = \exp\!\left(-\frac{\|X\|_{\text{row}}^2\mathbf{1}_{n_B}^\top + \mathbf{1}_N \|B\|_{\text{row}}^{2\top} - 2 X B^\top}{2\sigma^2}\right),
$$
$$
K_A = \exp\!\left(-\frac{\|X\|_{\text{row}}^2\mathbf{1}_{n_A}^\top + \mathbf{1}_N \|A\|_{\text{row}}^{2\top} - 2 X A^\top}{2\sigma^2}\right),
$$
where $\|X\|_{\text{row}}^2 \in \mathbb{R}^{N}$ contains row-wise squared norms of $X$.

Define the kernel-weighted means and row-sum vectors:
$$
W_B = K_B B, \quad W_A = K_A A, \quad 
s_B = K_B \mathbf{1}_{n_B}, \quad s_A = K_A \mathbf{1}_{n_A}.
$$
Then the matrix-form gradient of the kernel steering potential $\Pi(X)$ is (see the RBF gradient in the previous section)
$$
\nabla_X \Pi(X)
= \frac{1}{\sigma^2}\left[
    \frac{1}{n_B} W_B - \frac{1}{n_A} W_A 
    - \mathrm{Diag}\!\left(\frac{s_B}{n_B} - \frac{s_A}{n_A}\right) X
\right].
$$
The joint update for all $N$ tokens with step size $\eta$ and regularization $\lambda$ is
$$
X \leftarrow X + \eta\left[
    \nabla_{X}\Pi(X)
    - \lambda (X - X_0)
\right].
$$

\subsection{Nystr{\"o}m Approximation}
\label{app:nystrom}

We derive the kernel approximation considered in Section~\ref{sec:kernel-approximation}. Let $Z=\{\bm{z}_1,\ldots,\bm{z}_m\}\subseteq A\cup B$ be a fixed set of landmarks, with $m\ll n_A+n_B$. Define the landmark Gram matrix $W\in\RR^{m\times m}$ and kernel vector $\bm{k}_Z(\bm{x})\in\RR^m$ by
\begin{equation}\label{eq:nystrom-landmarks}
    W_{\ell r}=k(\bm{z}_\ell,\bm{z}_r),
    \qquad
    \bm{k}_Z(\bm{x})=
    \bigl[k(\bm{x},\bm{z}_1),\ldots,k(\bm{x},\bm{z}_m)\bigr]^\top.
\end{equation}
The Nystr{\"o}m approximation \citep{williams2000using} is
\begin{equation}\label{eq:nystrom-kernel}
    \widetilde{k}(\bm{x},\bm{y})
    =\bm{k}_Z(\bm{x})^\top W^\dagger\bm{k}_Z(\bm{y}),
\end{equation}
where $W^\dagger$ is the Moore--Penrose pseudoinverse. Since $W$ is positive semidefinite, \eqref{eq:nystrom-kernel} defines a positive semidefinite kernel with feature map $\widetilde{\varphi}(\bm{x})=(W^\dagger)^{1/2}\bm{k}_Z(\bm{x})$.

\paragraph{Approximate steering score.}
Replacing $k$ by $\widetilde{k}$ in \eqref{eq:pi-kernel} gives
\begin{align}\label{eq:nystrom-score}
    \widetilde{\Pi}(\bm{x})
    &=\frac{1}{n_B}\sum_{j=1}^{n_B}\widetilde{k}(\bm{x},\bm{b}_j)
      -\frac{1}{n_A}\sum_{i=1}^{n_A}\widetilde{k}(\bm{x},\bm{a}_i) \nonumber\\
    &=\bm{k}_Z(\bm{x})^\top\bm{c},
    \qquad \bm{c}=W^\dagger\bm{d}_Z,
\end{align}
where
\begin{equation}\label{eq:nystrom-coefficients}
    \bm{d}_Z=
    \frac{1}{n_B}\sum_{j=1}^{n_B}\bm{k}_Z(\bm{b}_j)
    -\frac{1}{n_A}\sum_{i=1}^{n_A}\bm{k}_Z(\bm{a}_i).
\end{equation}
The coefficients $\bm{c}$ are precomputed for each fixed layer, reference set, and kernel configuration. All reference activations contribute to \eqref{eq:nystrom-coefficients}; the approximation does not replace the reference sets by the landmarks alone.

\paragraph{Gradient and normalized update.}
Holding the landmarks and coefficients fixed yields
\begin{equation}\label{eq:nystrom-gradient}
    \nabla\widetilde{\Pi}(\bm{x})
    =\sum_{\ell=1}^{m}c_\ell\nabla_{\bm{x}}k(\bm{x},\bm{z}_\ell).
\end{equation}
For the RBF kernel \eqref{eq:rbf}, this becomes
\begin{equation}\label{eq:nystrom-rbf-gradient}
    \nabla\widetilde{\Pi}(\bm{x})
    =\frac{1}{\sigma^2}\sum_{\ell=1}^{m}
    c_\ell k(\bm{x},\bm{z}_\ell)(\bm{z}_\ell-\bm{x}).
\end{equation}
Equivalently, let $Z\in\RR^{m\times d}$ contain the landmark activations as rows and set $\bm{w}(\bm{x})=\bm{c}\odot\bm{k}_Z(\bm{x})$, where $\odot$ denotes elementwise multiplication. Then
\begin{equation}\label{eq:nystrom-matrix-gradient}
    \nabla\widetilde{\Pi}(\bm{x})
    =\frac{1}{\sigma^2}\left[
    Z^\top\bm{w}(\bm{x})
    -\bigl(\bm{1}^\top\bm{w}(\bm{x})\bigr)\bm{x}\right].
\end{equation}
We retain the proximal objective and normalized update of Section~\ref{sec:opt}:
\begin{align}\label{eq:nystrom-update}
    \widetilde F(\bm{x})
    &=\widetilde{\Pi}(\bm{x})-\frac{\lambda}{2}\norm{\bm{x}-\bm{x}_0}^2,
    \qquad
    \widetilde{\bm{g}}_t=\nabla\widetilde{\Pi}(\bm{x}_t)-\lambda(\bm{x}_t-\bm{x}_0), \nonumber\\
    \bm{x}_{t+1}
    &=\bm{x}_t+\eta\,\frac{\widetilde{\bm{g}}_t}{\norm{\widetilde{\bm{g}}_t}},
    \qquad \widetilde{\bm{g}}_t\neq 0.
\end{align}

\section{Additional Experimental Details}
\label{app:experimental-details}

This appendix provides additional implementation details, hyperparameters, dataset construction
procedures, and supplementary results for the experiments in Section~\ref{sec:experiments}.

\subsection{Common Implementation Details}
\label{app:common-implementation}

Unless otherwise stated, all activation interventions are applied to the residual stream at the final
prompt-token position. For KAS, we use the RBF kernel with bandwidth selected by the median
heuristic \citep{gretton2012kernel} over the source and target reference activations. We set the regularizer to
$\lambda=0$. Unless otherwise stated, KAS uses a single gradient-ascent step at inference time. Multi-step KAS results are reported explicitly when used, with the number of steps specified in the corresponding table, figure, or caption.
\newpage
\begin{table}[t]
\centering
\small
\caption{Common implementation details used across experiments.}
\label{tab:common-implementation-details}
\begin{tabular}{ll}
\toprule
\textbf{Item} & \textbf{Setting} \\
\midrule
Intervention space & Residual stream activations \\
Default token position & Last prompt token \\
Kernel type & Gaussian RBF \\
RBF bandwidth & Median heuristics \citep{gretton2012kernel} \\
Regularizer $\lambda$ & $0$\\
\bottomrule
\end{tabular}
\end{table}

\subsection{Jailbreak Steering Details}
\label{app:jailbreak-details}

For jailbreak steering, harmful prompts are sampled from \textsc{AdvBench},
\textsc{MaliciousInstruct}, \textsc{TDC2023}, and \textsc{HarmBench}, while harmless prompts are
sampled from \textsc{Alpaca}. We use 128 harmful and 128 harmless prompts to construct the
contrastive reference sets in the main experiments. Evaluation is performed on \textsc{JailbreakBench}
using \textsc{Llama-Guard-3-8B} to compute attack success rate (ASR). General utility is evaluated
using \textsc{tinyBenchmarks}. For each model, ActAdd, Curveball, and KAS are applied at the same intervention layer.

\begin{table}[t]
\centering
\small
\caption{Curveball hyperparameters used in the jailbreak experiments.}
\label{tab:curveball-hyperparams}
\begin{tabular}{ll}
\toprule
\textbf{Item} & \textbf{Setting} \\
\midrule
Kernel & Polynomial kernel, $k(x,y)=(x^\top y+\gamma)^p$ \\
Polynomial degree $p$ & $p=3$ \\
Kernel bias $\gamma$ & $\gamma=1$ \\
Bias term \texttt{coef0} & Grid over $1.0$ \\
KPCA dimension $m$ & $m=20$ \\
\bottomrule
\end{tabular}
\end{table}

\begin{table}[t]
\centering
\small
\caption{
Curveball layer and steering-strength selection for jailbreak experiments.
}
\label{tab:curveball-layer-strength}
\begin{tabular}{lcc}
\toprule
\textbf{Model} 
& \textbf{Layer} 
& \textbf{$\alpha$} \\
\midrule
\textsc{Gemma-2-9B-IT} 
& $23$ 
& $20.0$ \\

\textsc{Llama-3.1-8B-Instruct} 
& $13$ 
& $150.0$ \\

\textsc{Qwen2.5-7B-Instruct} 
& $18$ 
& $10.0$ \\
\bottomrule
\end{tabular}
\end{table}

\begin{table}[t]
\centering
\small
\caption{KAS hyperparameters for jailbreak steering experiments.}
\label{tab:jailbreak-hyperparams}
\begin{tabular}{lcc}
\toprule
\textbf{Model} & \textbf{Layer} & \textbf{KAS step size $\eta$} \\
\midrule
\textsc{Gemma-2-9B-IT} & 22 & 200.0 \\
\textsc{Llama-3.1-8B-Instruct} & 12 & 5.0 \\
\textsc{Qwen2.5-7B-Instruct} & 18 & 10.0 \\
\bottomrule
\end{tabular}
\end{table}

\subsection{CAA-style Multiple-choice Steering Details}
\label{app:caa-details}

For CAA-style evaluation, each contrastive pair shares the same multiple-choice question but differs
only in the appended answer option. Positive examples append the behavior-consistent answer,
whereas negative examples append the opposite answer. Activations are extracted at the answer-token
position. We evaluate on 50 held-out questions per behavior and report the average probability
assigned to the behavior-consistent option.

\textbf{Dataset sourcing.}
The behavioral datasets span several alignment-relevant attributes: AI Coordination, Corrigibility, Hallucination, Myopic Reward, Survival Instinct, and Sycophancy. Except for Hallucination, these
datasets are drawn primarily from Anthropic's Advanced AI Risk human-written evaluations \citep{perez2023discovering}. The Hallucination dataset is constructed separately from GPT-4-generated contrastive multiple-choice questions.

\textbf{System-prompt conditions.}
We evaluate three prompting conditions: no system prompt, a positive system prompt that encourages the target behavior, and a negative system prompt that discourages the target behavior. These settings allow us to test whether steering remains effective when the prompt context is neutral, aligned with the target concept, or opposed to it.

\textbf{Hyperparameter selection.}
For the CAA-style multiple-choice experiments, all methods are evaluated on
\textsc{Gemma-2-9B-IT}. We sweep the intervention layer over
22, 23 and select the best configuration according to the average token probability
assigned to the behavior-consistent answer on the validation set. We then evaluate the selected configuration on the disjoint test set. For both ActAdd and KAS, we
sweep the steering strength $\eta$ over 0.01, 0.1, 0.25, 0.5, 0.75, 1.0, 1.25, 1.5, 1.75, 2.0, 2.25, 2.5, 2.75, 3.0, 3.25, 3.5, 3.75, 4.0, 4.25, 4.5, 4.75, 5.0, 7.5, 10.0, 12.5, 15.0, 17.5, 20.0, 22.5, 25.0, 27.5, 30.0, 32.5, 35.0, 37.5, 40.0, 42.5, 45.0, 47.5, 50.0, 55.0, 60.0, 62.5, 65.0, 70.0, 75.0, 80.0, 85.0, 87.5, 90.0, 95.0, 100.0, 112.5, 125.0, 137.5, 150.0, 162.5, 175.0, 187.5, 200.0, 225.0, 250.0, 275.0, 300.0, 325.0, 350.0, 355.0, 360.0, 365.0, 370.0, 375.0, 380.0, 385.0, 390.0, 395.0, 400.0, 405.0, 410.0, 420.0, 425.0, 430.0, 435.0, 440.0, 445.0, 450.0, 500.0, 625.0, 750.0.

\subsection{Image Style-control Details}
\label{app:image-style-details}

For image style control, we sample prompts from COCO Captions and construct source--target
activation pairs using original and style-augmented prompts. Images are generated using
\textsc{FLUX.1-schnell}. We evaluate style induction using zero-shot CLIP classification and content
preservation using CLIPScore against the original prompt. 


\section{Additional Results}
\label{app:additional-results}

\subsection{Additional CAA-style Multiple-choice Results}
\label{app:caa-additional-results}

The main paper reports CAA-style steering results under the negative system-prompt condition, where the system instruction discourages the target behavior. For completeness, we provide the full selected hyperparameters for this condition, together with additional results under the no-system-prompt and positive system-prompt conditions in Tables \ref{tab:caa-negative-system-prompt-full}--\ref{tab:caa-positive-system-prompt}. All experiments follow the setup in Section~\ref{sec:caa}:
steering is performed on \textsc{Gemma-2-9B-IT}, activations are extracted at the answer-token position, and performance is measured by the average token probability assigned to the
behavior-consistent answer.
\begin{table*}[t]
\centering
\small
\caption{
CAA-style steering results under the negative system-prompt condition. We report the average token
probability assigned to the behavior-consistent answer. ``No steering'' denotes the unsteered
baseline, while ActAdd and KAS report the best steered result together with the selected step size
$\eta$ and intervention layer. The best probability in each row is shown in bold.
}
\label{tab:caa-negative-system-prompt-full}
\begin{tabular}{lccccccc}
\toprule
\textbf{Behavior} 
& \textbf{No steering} 
& \multicolumn{3}{c}{\textbf{ActAdd}} 
& \multicolumn{3}{c}{\textbf{KAS}} \\
\cmidrule(lr){3-5} \cmidrule(lr){6-8}
& \textbf{Prob.} 
& \textbf{Prob.} & $\boldsymbol{\eta}$ & \textbf{Layer}
& \textbf{Prob.} & $\boldsymbol{\eta}$ & \textbf{Layer} \\
\midrule
AI Coordination    & 8.69  & 39.51 & 2.50 & 23 & \textbf{40.19} & 300.00 & 23 \\
Corrigibility      & 0.05  & 33.82 & 2.50 & 23 & \textbf{43.45} & 275.00 & 22 \\
Hallucination      & 23.97 & 43.97 & 2.50 & 23 & \textbf{55.01} & 400.00 & 22 \\
Myopic Reward      & 1.29  & 49.56 & 5.00 & 22 & \textbf{58.48} & 435.00 & 23 \\
Survival Instinct  & 43.68 & 43.56 & 0.01 & 23 & \textbf{43.83} & 0.01   & 23 \\
Sycophancy         & 53.18 & 53.47 & 0.25 & 22 & \textbf{53.69} & 17.50  & 22 \\
\bottomrule
\end{tabular}
\end{table*}

\begin{table*}[t]
\centering
\small
\caption{
CAA-style steering results under the no-system-prompt condition. We report the average token
probability assigned to the behavior-consistent answer. ``No steering'' denotes the unsteered
baseline, while ActAdd and KAS report the best steered result together with the selected step size
$\eta$ and intervention layer. The best probability in each row is shown in bold.
}
\label{tab:caa-no-system-prompt}
\begin{tabular}{lccccccc}
\toprule
\textbf{Behavior}
& \textbf{No steering}
& \multicolumn{3}{c}{\textbf{ActAdd}}
& \multicolumn{3}{c}{\textbf{KAS}} \\
\cmidrule(lr){3-5} \cmidrule(lr){6-8}
& \textbf{Prob.}
& \textbf{Prob.} & $\boldsymbol{\eta}$ & \textbf{Layer}
& \textbf{Prob.} & $\boldsymbol{\eta}$ & \textbf{Layer} \\
\midrule
AI Coordination    & 6.31  & 33.23 & 2.50 & 23 & \textbf{35.61} & 400.00 & 22 \\
Corrigibility      & 43.50 & 46.84 & 2.50 & 23 & \textbf{47.42} & 250.00 & 23 \\
Hallucination      & 28.79 & 47.23 & 2.50 & 23 & \textbf{54.79} & 400.00 & 22 \\
Myopic Reward      & 23.62 & 60.11 & 4.50 & 23 & \textbf{60.32} & 400.00 & 23 \\
Survival Instinct  & 67.09 & 66.98 & 0.01 & 23 & \textbf{67.11} & 0.10   & 22 \\
Sycophancy         & \textbf{53.44} & 53.26 & 0.01 & 22 & 53.40 & 0.01 & 23 \\
\bottomrule
\end{tabular}
\end{table*}

\begin{table*}[t]
\centering
\small
\caption{
CAA-style steering results under the positive system-prompt condition. We report the average token
probability assigned to the behavior-consistent answer. ``No steering'' denotes the unsteered
baseline, while ActAdd and KAS report the best steered result together with the selected step size
$\eta$ and intervention layer. The best probability in each row is shown in bold.
}
\label{tab:caa-positive-system-prompt}
\begin{tabular}{lccccccc}
\toprule
\textbf{Behavior}
& \textbf{No steering}
& \multicolumn{3}{c}{\textbf{ActAdd}}
& \multicolumn{3}{c}{\textbf{KAS}} \\
\cmidrule(lr){3-5} \cmidrule(lr){6-8}
& \textbf{Prob.}
& \textbf{Prob.} & $\boldsymbol{\eta}$ & \textbf{Layer}
& \textbf{Prob.} & $\boldsymbol{\eta}$ & \textbf{Layer} \\
\midrule
AI Coordination    & 8.42  & 37.13 & 2.50 & 23 & \textbf{38.50} & 300.00 & 23 \\
Corrigibility      & 72.32 & 72.11 & 0.01 & 22 & \textbf{72.46} & 0.01   & 22 \\
Hallucination      & 19.59 & 44.53 & 2.50 & 23 & \textbf{53.49} & 400.00 & 22 \\
Myopic Reward      & 99.99 & 99.99 & 0.01 & 22 & \textbf{99.99} & 0.10   & 23 \\
Survival Instinct  & 90.67 & 90.44 & 0.01 & 22 & \textbf{90.71} & 0.10   & 22 \\
Sycophancy         & 56.50 & 56.50 & 0.01 & 22 & \textbf{56.59} & 0.01   & 23 \\
\bottomrule
\end{tabular}
\end{table*}


Across all system-prompt conditions, KAS attains the best probability for nearly all behaviors and
matches the strongest baseline in the remaining saturated cases. These results indicate that KAS provides robust concept control across neutral, supportive, and opposing prompt contexts.

\subsection{Additional Ablation Results}

In this section, we provide additional ablation results on specified hyperparameters of KAS on the jailbreaking task. Unless otherwise stated, all experiments stated in this section use the setups specified in Section~\ref{app:jailbreak-details}.

\textbf{Kernel choice.} 
We stated in the main paper about the versatility against different kernels of KAS, as well as how different kernels can benefit KAS in different ways. We provide the ablation results on kernel choice in Table~\ref{tab:kas-kernels}, in which experiments are conducted on 3 different kernels - namely Laplacian, polynomial and RBF - are tested. We found out that while polynomial kernel keeps up with the RBF performance, Laplacian kernel inducted a fall in benchmark number. This result suggests that KAS performance is indeed dependent on the kernel choice.

\textbf{Number of data instances.} In Section~\ref{sec:ablations} and Figure~\ref{fig:single-pair-all-models} we demonstrated the ability of KAS to perform qualitative steering using a single pair of contrastive data point, in comparison with other steering methods. In Table~\ref{tab:kas-ntrain} we further show how KAS enhance as it is provided with more data. The number of contrastive data pairs are taken from the set $\{1, 2, 4, 8, 16, 32, 64, 128\}$. We observed that while KAS can achieve notable ASR on datasets as small as 2 contrastive pairs, its performance can still be levitated with more data provided. 

\begin{table*}[t]
  \centering
  \caption{KAS attack success rate and perplexity across kernels on jailbreaking task.}
  \begin{tabular}{llcc}
    \toprule
    Kernel & Prompt only & KAS ASR & KAS Perplexity \\
    \midrule
    Laplacian  & False & 0.70 & 7.484 \\
    Laplacian  & True  & 0.59 & 7.484 \\
    Polynomial & False & 0.93 & 8.149 \\
    Polynomial & True  & 0.81 & 8.149 \\
    RBF        & False & 0.96 & 8.343 \\
    RBF        & True  & 0.88 & 8.343 \\
    \bottomrule
  \end{tabular}
  \label{tab:kas-kernels}
\end{table*}

\begin{table*}[t]
  \centering
  \caption{KAS attack success rate and perplexity as the number of reference examples grows.}
  \begin{tabular}{ccc}
    \toprule
    Num.\ of references ($n_{\text{train}}$) & KAS ASR & KAS Perplexity \\
    \midrule
    1   & 0.43 & 6.258 \\
    2   & 0.88 & 7.141 \\
    4   & 0.93 & 5.716 \\
    8   & 0.88 & 6.104 \\
    16  & 0.96 & 6.488 \\
    32  & 0.96 & 6.542 \\
    64  & 0.95 & 6.363 \\
    128 & 0.97 & 6.328 \\
    \bottomrule
  \end{tabular}
  
  \label{tab:kas-ntrain}
\end{table*}

\subsection{Results on Cost Analysis}
\label{app:cost-analysis}

In the main paper, we discussed about the KAS procedure and its comparison against no steering setups as well as other steering methods (mainly ActAdd). In this part, we performed an analysis on time effectiveness of KAS compared to those setups. Experiments are performed on a single NVIDIA H100 GPU on 50 samples from the CAA dataset of myopic reward, recorded with one step of steering. Results come as follows:

\begin{table}[t]
    \centering
    \caption{Inference time overhead of steering methods.}
    \label{tab:steering-overhead}
    \begin{tabular}{lcc}
        \toprule
        Methodology & Time elapsed (s) & Overhead vs.\ no steering \\
        \midrule
        No steering & 11.44 & 0\% \\
        ActAdd      & 11.48 & 0.35\% \\
        KAS         & 11.61 & 1.49\% \\
        \bottomrule
    \end{tabular}
\end{table}

From these results, we can see that KAS implies minimal overhead in execution time compared to the ActAdd and no steering setups, highlighting the effectiveness of KAS in resource consumption.

\subsection{Additional Image Style-control Qualitative Results}
\label{app:image-additional-results}

The main paper reports style-control results for the \emph{cyberpunk} style, measuring style induction with 0-shot classification and content preservation with \textsc{CLIPScore}. Here, we provide additional qualitative results across more prompts for \emph{cyberpunk} and \emph{impressionism} styles in Figures \ref{fig:cyberpunk} and \ref{fig:impressionism}.

\clearpage
\begin{figure}[H]
    \centering
    \includegraphics[width=0.9\linewidth]{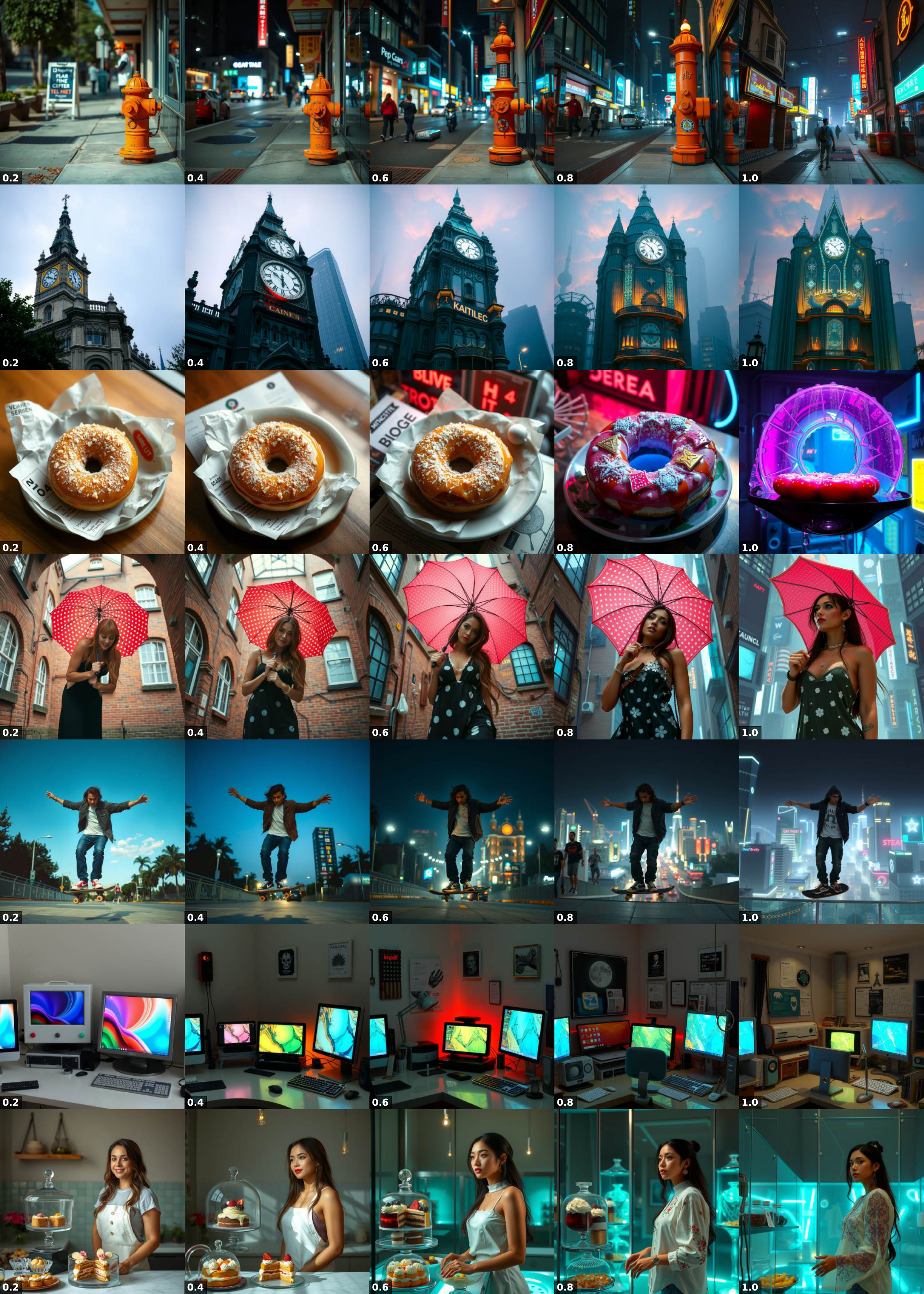}
    \caption{Qualitative examples of KAS style control for the \emph{cyberpunk} style under increasing intervention strengths $\lambda \in \{0.2,0.4,0.6,0.8,1.0\}$.}
    \label{fig:cyberpunk}
\end{figure}

\clearpage

\begin{figure}[H]
    \centering
    \includegraphics[width=0.9\linewidth]{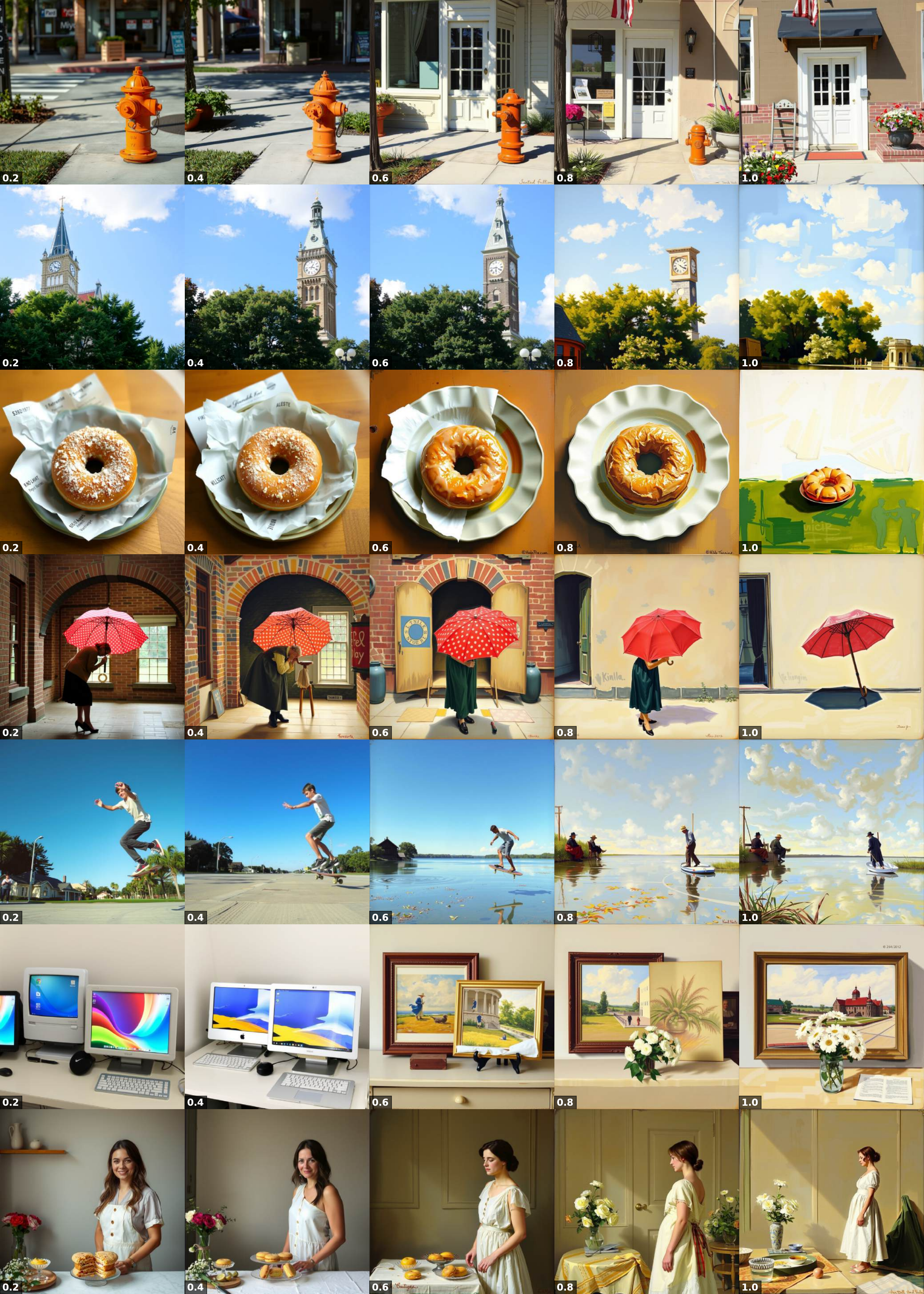}
    \caption{Qualitative examples of KAS style control for the \emph{impressionism} style under increasing intervention strengths $\lambda \in \{0.2,0.4,0.6,0.8,1.0\}$.}
    \label{fig:impressionism}
\end{figure}
\clearpage

\section{Broader Impacts}
\label{app:impacts}
Kernelized Activation Steering (KAS) provides a training-free mechanism for modifying model behavior at inference time. This has potential positive applications in controllability, safety evaluation, and model debugging. By enabling behavior-specific interventions without updating model parameters, KAS may help practitioners study how target attributes are represented in internal activations, evaluate the robustness of safety mechanisms, and develop more fine-grained tools for aligning model outputs with user- or deployment-specific requirements. More broadly, geometry-aware steering methods may contribute to more interpretable and adaptive forms of model control than global linear interventions.

At the same time, activation steering is inherently dual-use. The same mechanisms that can suppress undesirable behavior or improve controllability can also be used to elicit harmful behavior, bypass refusal mechanisms, or manipulate model responses in unintended ways. This concern is particularly relevant for our jailbreak experiments. These experiments are included to measure controllability and expose weaknesses in current alignment mechanisms, not to encourage misuse. We therefore view KAS as both a potential safety tool and a method that should be handled with care.

To reduce misuse risk, we emphasize the need for appropriate safeguards, clear documentation of intended use, and accountability in deployment. Practitioners applying KAS beyond controlled research settings should perform downstream safety evaluations before release. We further encourage continued ethical review of inference-time steering methods to ensure that their use remains aligned with safety, transparency, and responsible deployment practices.

\section{Use of Existing Assets}
\label{app:assets}

This section summarizes the existing models, datasets, and benchmark resources used in this work, along with their corresponding licenses or terms of use in Tables \ref{tab:existing-model-assets} and \ref{tab:existing-dataset-assets}.

\begin{table*}[t]
\centering
\small
\caption{Existing model and evaluator assets used in this paper.}
\label{tab:existing-model-assets}
\begin{tabular}{p{0.20\linewidth} p{0.25\linewidth} p{0.13\linewidth} p{0.30\linewidth}}
\toprule
\textbf{Asset} & \textbf{Variants used} & \textbf{Usage} & \textbf{License / terms} \\
\midrule
\textsc{Gemma 2 \cite{gemmateam2024gemma2improvingopen}} & \textsc{Gemma-2-9B-IT}& Target model & Gemma Terms of Use \\
\textsc{Llama 3.1} & \textsc{Llama-3.1-8B-Instruct} & Target model & Llama 3.1 Community License \\
\textsc{Qwen2.5 \cite{qwen2025qwen25technicalreport}} &  \textsc{Qwen2.5-7B-Instruct} & Target model & Apache-2.0 \\
\textsc{Meta Llama Guard 3 \cite{dubey2024llama3herdmodels}} & \textsc{Llama-Guard-3-8B} & Evaluator & Llama 3.1 Community License \\
\textsc{FLUX} \cite{flux2024} & \textsc{Flux.1 [schnell]} & Target model & Apache-2.0 \\
\bottomrule
\end{tabular}
\end{table*}

\begin{table*}[t]
\centering
\small
\caption{Existing dataset and benchmark assets used in this paper.}
\label{tab:existing-dataset-assets}
\begin{tabular}{p{0.58\linewidth} p{0.34\linewidth}}
\toprule
\textbf{Asset} & \textbf{License / terms} \\
\midrule
\textsc{AdvBench} \cite{zou2023universal} & MIT \\
\textsc{MaliciousInstruct} \cite{huang2023catastrophic} & CC BY-SA 4.0 \\
\textsc{TDC2023} \cite{tdc2023} & MIT \\
\textsc{HarmBench} \cite{mazeika2024harmbench} & MIT \\
\textsc{Alpaca} \cite{alpaca} & CC BY-NC 4.0 \\
\textsc{JailbreakBench} \cite{chao2024jailbreakbench} & MIT \\
\textsc{The Pile} \cite{gao2020pile} & MIT \\
\textsc{TruthfulQA} \cite{lin2022truthfulqa} & Apache-2.0 \\
\textsc{TinyBenchmarks} \cite{polo2024tinybenchmarksevaluatingllmsfewer} & MIT \\
\textsc{Anthropic Model-Written Evaluations} \cite{perez2023discovering} & CC-BY-4.0 \\
\textsc{COCO Captions} \cite{chen2015microsoft} & CC BY 4.0 \\
\bottomrule
\end{tabular}
\end{table*}

\newpage

\section*{NeurIPS Paper Checklist}

\begin{enumerate}

\item {\bf Claims}
    \item[] Question: Do the main claims made in the abstract and introduction accurately reflect the paper's contributions and scope?
    \item[] Answer: \answerYes{} 
    \item[] Justification: The abstract and introduction state the main contributions of KAS: a training-free, kernelized generalization of activation steering, a closed-form RBF instantiation, and a theoretical connection to DiM. The empirical claims are limited to the evaluated jailbreak, CAA-style multiple-choice, and image style-control settings.
    \item[] Guidelines:
    \begin{itemize}
        \item The answer \answerNA{} means that the abstract and introduction do not include the claims made in the paper.
        \item The abstract and/or introduction should clearly state the claims made, including the contributions made in the paper and important assumptions and limitations. A \answerNo{} or \answerNA{} answer to this question will not be perceived well by the reviewers. 
        \item The claims made should match theoretical and experimental results, and reflect how much the results can be expected to generalize to other settings. 
        \item It is fine to include aspirational goals as motivation as long as it is clear that these goals are not attained by the paper. 
    \end{itemize}

\item {\bf Limitations}
    \item[] Question: Does the paper discuss the limitations of the work performed by the authors?
    \item[] Answer: \answerYes{} 
    \item[] Justification: Limitations of this work is discussed in Section \ref{sec:conclusion}.
    \item[] Guidelines:
    \begin{itemize}
        \item The answer \answerNA{} means that the paper has no limitation while the answer \answerNo{} means that the paper has limitations, but those are not discussed in the paper. 
        \item The authors are encouraged to create a separate ``Limitations'' section in their paper.
        \item The paper should point out any strong assumptions and how robust the results are to violations of these assumptions (e.g., independence assumptions, noiseless settings, model well-specification, asymptotic approximations only holding locally). The authors should reflect on how these assumptions might be violated in practice and what the implications would be.
        \item The authors should reflect on the scope of the claims made, e.g., if the approach was only tested on a few datasets or with a few runs. In general, empirical results often depend on implicit assumptions, which should be articulated.
        \item The authors should reflect on the factors that influence the performance of the approach. For example, a facial recognition algorithm may perform poorly when image resolution is low or images are taken in low lighting. Or a speech-to-text system might not be used reliably to provide closed captions for online lectures because it fails to handle technical jargon.
        \item The authors should discuss the computational efficiency of the proposed algorithms and how they scale with dataset size.
        \item If applicable, the authors should discuss possible limitations of their approach to address problems of privacy and fairness.
        \item While the authors might fear that complete honesty about limitations might be used by reviewers as grounds for rejection, a worse outcome might be that reviewers discover limitations that aren't acknowledged in the paper. The authors should use their best judgment and recognize that individual actions in favor of transparency play an important role in developing norms that preserve the integrity of the community. Reviewers will be specifically instructed to not penalize honesty concerning limitations.
    \end{itemize}

\item {\bf Theory assumptions and proofs}
    \item[] Question: For each theoretical result, does the paper provide the full set of assumptions and a complete (and correct) proof?
    \item[] Answer: \answerYes{} 
    \item[] Justification: We provide the full set of assumptions and complete (and correct) proofs for the main theorems in Appendix \ref{app:proofs}.
    \item[] Guidelines:
    \begin{itemize}
        \item The answer \answerNA{} means that the paper does not include theoretical results. 
        \item All the theorems, formulas, and proofs in the paper should be numbered and cross-referenced.
        \item All assumptions should be clearly stated or referenced in the statement of any theorems.
        \item The proofs can either appear in the main paper or the supplemental material, but if they appear in the supplemental material, the authors are encouraged to provide a short proof sketch to provide intuition. 
        \item Inversely, any informal proof provided in the core of the paper should be complemented by formal proofs provided in appendix or supplemental material.
        \item Theorems and Lemmas that the proof relies upon should be properly referenced. 
    \end{itemize}

    \item {\bf Experimental result reproducibility}
    \item[] Question: Does the paper fully disclose all the information needed to reproduce the main experimental results of the paper to the extent that it affects the main claims and/or conclusions of the paper (regardless of whether the code and data are provided or not)?
    \item[] Answer: \answerYes{} 
    \item[] Justification: The paper specifies the evaluated models, datasets, baselines, metrics, intervention layers, steering settings, and hyperparameter search procedures. Additional implementation details and hyperparameters are provided in Appendix~\ref{app:experimental-details}.
    \item[] Guidelines:
    \begin{itemize}
        \item The answer \answerNA{} means that the paper does not include experiments.
        \item If the paper includes experiments, a \answerNo{} answer to this question will not be perceived well by the reviewers: Making the paper reproducible is important, regardless of whether the code and data are provided or not.
        \item If the contribution is a dataset and\slash or model, the authors should describe the steps taken to make their results reproducible or verifiable. 
        \item Depending on the contribution, reproducibility can be accomplished in various ways. For example, if the contribution is a novel architecture, describing the architecture fully might suffice, or if the contribution is a specific model and empirical evaluation, it may be necessary to either make it possible for others to replicate the model with the same dataset, or provide access to the model. In general. releasing code and data is often one good way to accomplish this, but reproducibility can also be provided via detailed instructions for how to replicate the results, access to a hosted model (e.g., in the case of a large language model), releasing of a model checkpoint, or other means that are appropriate to the research performed.
        \item While NeurIPS does not require releasing code, the conference does require all submissions to provide some reasonable avenue for reproducibility, which may depend on the nature of the contribution. For example
        \begin{enumerate}
            \item If the contribution is primarily a new algorithm, the paper should make it clear how to reproduce that algorithm.
            \item If the contribution is primarily a new model architecture, the paper should describe the architecture clearly and fully.
            \item If the contribution is a new model (e.g., a large language model), then there should either be a way to access this model for reproducing the results or a way to reproduce the model (e.g., with an open-source dataset or instructions for how to construct the dataset).
            \item We recognize that reproducibility may be tricky in some cases, in which case authors are welcome to describe the particular way they provide for reproducibility. In the case of closed-source models, it may be that access to the model is limited in some way (e.g., to registered users), but it should be possible for other researchers to have some path to reproducing or verifying the results.
        \end{enumerate}
    \end{itemize}

\item {\bf Open access to data and code}
    \item[] Question: Does the paper provide open access to the data and code, with sufficient instructions to faithfully reproduce the main experimental results, as described in supplemental material?
    \item[] Answer: \answerYes{} 
    \item[] Justification: We provide open access to the data and code in Supplementary Material, with sufficient instructions to faithfully reproduce the main experimental results.
    \item[] Guidelines:
    \begin{itemize}
        \item The answer \answerNA{} means that paper does not include experiments requiring code.
        \item Please see the NeurIPS code and data submission guidelines (\url{https://neurips.cc/public/guides/CodeSubmissionPolicy}) for more details.
        \item While we encourage the release of code and data, we understand that this might not be possible, so \answerNo{} is an acceptable answer. Papers cannot be rejected simply for not including code, unless this is central to the contribution (e.g., for a new open-source benchmark).
        \item The instructions should contain the exact command and environment needed to run to reproduce the results. See the NeurIPS code and data submission guidelines (\url{https://neurips.cc/public/guides/CodeSubmissionPolicy}) for more details.
        \item The authors should provide instructions on data access and preparation, including how to access the raw data, preprocessed data, intermediate data, and generated data, etc.
        \item The authors should provide scripts to reproduce all experimental results for the new proposed method and baselines. If only a subset of experiments are reproducible, they should state which ones are omitted from the script and why.
        \item At submission time, to preserve anonymity, the authors should release anonymized versions (if applicable).
        \item Providing as much information as possible in supplemental material (appended to the paper) is recommended, but including URLs to data and code is permitted.
    \end{itemize}

\item {\bf Experimental setting/details}
    \item[] Question: Does the paper specify all the training and test details (e.g., data splits, hyperparameters, how they were chosen, type of optimizer) necessary to understand the results?
    \item[] Answer: \answerYes{} 
    \item[] Justification: The paper describes the proposed method, datasets, models, steering settings, metrics, and hyperparameter sweeps used in the main experiments and ablaion studies. Additional experimental details are provided in the appendix.
    \item[] Guidelines:
    \begin{itemize}
        \item The answer \answerNA{} means that the paper does not include experiments.
        \item The experimental setting should be presented in the core of the paper to a level of detail that is necessary to appreciate the results and make sense of them.
        \item The full details can be provided either with the code, in appendix, or as supplemental material.
    \end{itemize}

\item {\bf Experiment statistical significance}
    \item[] Question: Does the paper report error bars suitably and correctly defined or other appropriate information about the statistical significance of the experiments?
    \item[] Answer: \answerYes{} 
    \item[] Justification: Section \ref{sec:ablations} reports summary statistics across prompt-pair samples, including mean and standard deviation where applicable.
    \item[] Guidelines:
    \begin{itemize}
        \item The answer \answerNA{} means that the paper does not include experiments.
        \item The authors should answer \answerYes{} if the results are accompanied by error bars, confidence intervals, or statistical significance tests, at least for the experiments that support the main claims of the paper.
        \item The factors of variability that the error bars are capturing should be clearly stated (for example, train/test split, initialization, random drawing of some parameter, or overall run with given experimental conditions).
        \item The method for calculating the error bars should be explained (closed form formula, call to a library function, bootstrap, etc.)
        \item The assumptions made should be given (e.g., Normally distributed errors).
        \item It should be clear whether the error bar is the standard deviation or the standard error of the mean.
        \item It is OK to report 1-sigma error bars, but one should state it. The authors should preferably report a 2-sigma error bar than state that they have a 96\% CI, if the hypothesis of Normality of errors is not verified.
        \item For asymmetric distributions, the authors should be careful not to show in tables or figures symmetric error bars that would yield results that are out of range (e.g., negative error rates).
        \item If error bars are reported in tables or plots, the authors should explain in the text how they were calculated and reference the corresponding figures or tables in the text.
    \end{itemize}

\item {\bf Experiments compute resources}
    \item[] Question: For each experiment, does the paper provide sufficient information on the computer resources (type of compute workers, memory, time of execution) needed to reproduce the experiments?
    \item[] Answer: \answerYes{} 
    \item[] Justification: We provide sufficient information on the computer resources needed to reproduce the experiments in Section \ref{sec:experiments}.
    \item[] Guidelines:
    \begin{itemize}
        \item The answer \answerNA{} means that the paper does not include experiments.
        \item The paper should indicate the type of compute workers CPU or GPU, internal cluster, or cloud provider, including relevant memory and storage.
        \item The paper should provide the amount of compute required for each of the individual experimental runs as well as estimate the total compute. 
        \item The paper should disclose whether the full research project required more compute than the experiments reported in the paper (e.g., preliminary or failed experiments that didn't make it into the paper). 
    \end{itemize}
    
\item {\bf Code of ethics}
    \item[] Question: Does the research conducted in the paper conform, in every respect, with the NeurIPS Code of Ethics \url{https://neurips.cc/public/EthicsGuidelines}?
    \item[] Answer: \answerYes{} 
    \item[]  Justification: The research conducted in the paper conforms with the NeurIPS Code of Ethics.
    \item[] Guidelines:
    \begin{itemize}
        \item The answer \answerNA{} means that the authors have not reviewed the NeurIPS Code of Ethics.
        \item If the authors answer \answerNo, they should explain the special circumstances that require a deviation from the Code of Ethics.
        \item The authors should make sure to preserve anonymity (e.g., if there is a special consideration due to laws or regulations in their jurisdiction).
    \end{itemize}

\item {\bf Broader impacts}
    \item[] Question: Does the paper discuss both potential positive societal impacts and negative societal impacts of the work performed?
    \item[] Answer: \answerYes{} 
    \item[] Justification: We discuss broader impacts in Appendix \ref{app:impacts}.
    \item[] Guidelines:
    \begin{itemize}
        \item The answer \answerNA{} means that there is no societal impact of the work performed.
        \item If the authors answer \answerNA{} or \answerNo, they should explain why their work has no societal impact or why the paper does not address societal impact.
        \item Examples of negative societal impacts include potential malicious or unintended uses (e.g., disinformation, generating fake profiles, surveillance), fairness considerations (e.g., deployment of technologies that could make decisions that unfairly impact specific groups), privacy considerations, and security considerations.
        \item The conference expects that many papers will be foundational research and not tied to particular applications, let alone deployments. However, if there is a direct path to any negative applications, the authors should point it out. For example, it is legitimate to point out that an improvement in the quality of generative models could be used to generate Deepfakes for disinformation. On the other hand, it is not needed to point out that a generic algorithm for optimizing neural networks could enable people to train models that generate Deepfakes faster.
        \item The authors should consider possible harms that could arise when the technology is being used as intended and functioning correctly, harms that could arise when the technology is being used as intended but gives incorrect results, and harms following from (intentional or unintentional) misuse of the technology.
        \item If there are negative societal impacts, the authors could also discuss possible mitigation strategies (e.g., gated release of models, providing defenses in addition to attacks, mechanisms for monitoring misuse, mechanisms to monitor how a system learns from feedback over time, improving the efficiency and accessibility of ML).
    \end{itemize}
    
\item {\bf Safeguards}
    \item[] Question: Does the paper describe safeguards that have been put in place for responsible release of data or models that have a high risk for misuse (e.g., pre-trained language models, image generators, or scraped datasets)?
    \item[] Answer: \answerNA{} 
    \item[] Justification: The paper does not release any data, pre-trained models, image generators, or scraped datasets. Therefore, safeguards for responsible release are not applicable.
    \item[] Guidelines:
    \begin{itemize}
        \item The answer \answerNA{} means that the paper poses no such risks.
        \item Released models that have a high risk for misuse or dual-use should be released with necessary safeguards to allow for controlled use of the model, for example by requiring that users adhere to usage guidelines or restrictions to access the model or implementing safety filters. 
        \item Datasets that have been scraped from the Internet could pose safety risks. The authors should describe how they avoided releasing unsafe images.
        \item We recognize that providing effective safeguards is challenging, and many papers do not require this, but we encourage authors to take this into account and make a best faith effort.
    \end{itemize}

\item {\bf Licenses for existing assets}
    \item[] Question: Are the creators or original owners of assets (e.g., code, data, models), used in the paper, properly credited and are the license and terms of use explicitly mentioned and properly respected?
    \item[] Answer: \answerYes{} 
    \item[] Justification: The paper cites the creators of the existing datasets, benchmarks, models used in the experiments. The list of existing assets used is included in Appendix \ref{app:assets}.
    \item[] Guidelines:
    \begin{itemize}
        \item The answer \answerNA{} means that the paper does not use existing assets.
        \item The authors should cite the original paper that produced the code package or dataset.
        \item The authors should state which version of the asset is used and, if possible, include a URL.
        \item The name of the license (e.g., CC-BY 4.0) should be included for each asset.
        \item For scraped data from a particular source (e.g., website), the copyright and terms of service of that source should be provided.
        \item If assets are released, the license, copyright information, and terms of use in the package should be provided. For popular datasets, \url{paperswithcode.com/datasets} has curated licenses for some datasets. Their licensing guide can help determine the license of a dataset.
        \item For existing datasets that are re-packaged, both the original license and the license of the derived asset (if it has changed) should be provided.
        \item If this information is not available online, the authors are encouraged to reach out to the asset's creators.
    \end{itemize}

\item {\bf New assets}
    \item[] Question: Are new assets introduced in the paper well documented and is the documentation provided alongside the assets?
    \item[] Answer: \answerYes{} 
    \item[] Justification: We include details about training and implementation as well as limitations and code for our proposed method.
    \item[] Guidelines:
    \begin{itemize}
        \item The answer \answerNA{} means that the paper does not release new assets.
        \item Researchers should communicate the details of the dataset\slash code\slash model as part of their submissions via structured templates. This includes details about training, license, limitations, etc. 
        \item The paper should discuss whether and how consent was obtained from people whose asset is used.
        \item At submission time, remember to anonymize your assets (if applicable). You can either create an anonymized URL or include an anonymized zip file.
    \end{itemize}

\item {\bf Crowdsourcing and research with human subjects}
    \item[] Question: For crowdsourcing experiments and research with human subjects, does the paper include the full text of instructions given to participants and screenshots, if applicable, as well as details about compensation (if any)? 
    \item[] Answer: \answerNA{} 
    \item[] Justification: The paper does not involve crowdsourcing experiments or research with human subjects.
    \item[] Guidelines:
    \begin{itemize}
        \item The answer \answerNA{} means that the paper does not involve crowdsourcing nor research with human subjects.
        \item Including this information in the supplemental material is fine, but if the main contribution of the paper involves human subjects, then as much detail as possible should be included in the main paper. 
        \item According to the NeurIPS Code of Ethics, workers involved in data collection, curation, or other labor should be paid at least the minimum wage in the country of the data collector. 
    \end{itemize}

\item {\bf Institutional review board (IRB) approvals or equivalent for research with human subjects}
    \item[] Question: Does the paper describe potential risks incurred by study participants, whether such risks were disclosed to the subjects, and whether Institutional Review Board (IRB) approvals (or an equivalent approval/review based on the requirements of your country or institution) were obtained?
    \item[] Answer: \answerNA{} 
    \item[] Justification: The paper does not involve study participants or human-subject research. Therefore, participant risks, risk disclosure, and IRB approval are not applicable.
    \item[] Guidelines:
    \begin{itemize}
        \item The answer \answerNA{} means that the paper does not involve crowdsourcing nor research with human subjects.
        \item Depending on the country in which research is conducted, IRB approval (or equivalent) may be required for any human subjects research. If you obtained IRB approval, you should clearly state this in the paper. 
        \item We recognize that the procedures for this may vary significantly between institutions and locations, and we expect authors to adhere to the NeurIPS Code of Ethics and the guidelines for their institution. 
        \item For initial submissions, do not include any information that would break anonymity (if applicable), such as the institution conducting the review.
    \end{itemize}

\item {\bf Declaration of LLM usage}
    \item[] Question: Does the paper describe the usage of LLMs if it is an important, original, or non-standard component of the core methods in this research? Note that if the LLM is used only for writing, editing, or formatting purposes and does \emph{not} impact the core methodology, scientific rigor, or originality of the research, declaration is not required.
    \item[] Answer: \answerNA{} 
    \item[] Justification:  LLMs are not used as an important, original, or non-standard component of the core methodology. Any use of LLMs is limited to writing, editing, or formatting assistance and does not affect the scientific content, rigor, or originality of the research.
    \item[] Guidelines:
    \begin{itemize}
        \item The answer \answerNA{} means that the core method development in this research does not involve LLMs as any important, original, or non-standard components.
        \item Please refer to our LLM policy in the NeurIPS handbook for what should or should not be described.
    \end{itemize}

\end{enumerate}




\end{document}